\Crefname{corollary}{Corollary}{Corollaries}
\Crefname{lemma}{Lemma}{Lemmas}
\Crefname{figure}{Figure}{Figures}
\Crefname{definition}{Definition}{Definitions}
\Crefname{inequality}{inequality}{inequalities}
\Crefname{example}{Example}{Examples}
\newcommand{\N}{\mathbb{N}}
\newcommand{\R}{\mathbb{R}}
\newcommand{\inv}{{^{-1}}}
\newcommand{\setst}{\mid}
\newcommand{\altsetst}{\,:\,}
\newcommand{\given}{\mid}
\newcommand{\altgiven}{;\,}
\newcommand{\iid}{\overset{iid}{\sim}}
\DeclareMathOperator*{\argmin}{\arg\,\min}
\newcommand{\tp}{^\top}
\DeclareMathOperator*{\E}{E}
\DeclareMathOperator*{\Var}{Var}
\renewcommand{\var}{\Var}
\newcommand{\D}{\mathcal{D}}
\renewcommand{\H}{\mathcal{H}}
\newcommand{\X}{\mathcal{X}}
\newcommand{\Y}{\mathcal{Y}}
\newcommand{\rhat}{\widehat{R}}
\newcommand{\thetahat}{\widehat{\theta}}
\DeclareMathOperator{\bel}{bel}
\DeclareMathOperator{\pl}{pl}
\newcommand{\Thetahat}{\widehat{\Theta}}
\newcommand{\varthetahat}{\widehat{\vartheta}}
\begin{document}

\title{Valid Inference for Machine Learning Model Parameters}

\author{\name Neil Dey \email ndey3@ncsu.edu\\
\name Jonathan P. Williams \email jwilli27@ncsu.edu \\
       \addr Department of Statistics\\
       North Carolina State University\\
       Raleigh, NC 27607-6698, USA
}

\editor{My editor}

\maketitle

\begin{abstract}
The parameters of a machine learning model are typically learned by minimizing a loss function on a set of training data. However, this can come with the risk of overtraining; in order for the model to generalize well, it is of great importance that we are able to find the optimal parameter for the model on the entire population---not only on the given training sample. In this paper, we construct valid confidence sets for this optimal parameter of a machine learning model, which can be generated using only the training data without any knowledge of the population. We then show that studying the distribution of this confidence set allows us to assign a notion of confidence to arbitrary regions of the parameter space, and we demonstrate that this distribution can be well-approximated using bootstrapping techniques. 
\end{abstract}

\begin{keywords}
  PAC-Learning, Glivenko-Cantelli Classes, Random Sets, Imprecise Probability, Hypothesis Testing
\end{keywords}

\section{Introduction}
Many machine learning (ML) applications train a model by finding parameters that minimize a loss function (such as the zero-one loss or squared loss) on some training data. This approach generalizes well outside the training data if the average loss on the training data is sufficiently close to the expected loss (i.e. ``risk") for the entire population. Classical \textit{probably approximately correct} (PAC) learning theory  \citep[e.g.][]{valiant1984, kearns1994, mohri2018} provides theoretical rates at which the empirical risk function converges to the population risk in various settings, but provides no theoretical guarantees about estimation properties of the risk \textit{minimizer}.
In general, little work has been done to quantify the uncertainty in an empirical risk minimizer (ERM).

Given that an ERM is a function of the data, aleatory uncertainty is a natural quantification of the inherent randomness of the ERM: 
It is desirable to be able to construct confidence sets and hypothesis tests for risk minimizers of machine learning models that are ``valid" in the sense that they posses frequentist repeated sampling guarantees. The utility of validity in machine learning contexts is perhaps most clearly illustrated by conformal prediction algorithm, introduced by \citet{vovk2005}. Conformal prediction can take a point prediction method (for classification or regression problems) and yield valid prediction regions for new data, in the sense that the generated prediction region will contain the true label with any prespecified level of confidence. Thus, there exists a very useful validity guarantee on the prediction regions generated by conformal prediction-enhanced machine learning algorithms. Because this validity property is so desirable, conformal prediction has found use in a variety of applications, such as online regression forests \citep{vasiloudis2019}, QSAR modelling for drug development \citep{eklund2015}, convolutional neural networks for image classification \citep{matiz2019}, and various deep-learning architectures \citep{soundouss2020}, among others. However, conformal prediction cannot quantify uncertainty in the estimate of the ERM (i.e. one of the ``inputs" of an untrained ML model)---only on the outputs of the trained model.

We also note that in the context of the broader development in the literature of inferential approaches with finite-sample validity guarantees (not necessarily restricted to ERMs), papers on ``e-values'' and ``e-processes'' have been gaining prominence---most notably in the the work of \citet{grunwald2018}, \citet{grunwald2020b}, \citet{wasserman2020}, and \citet{ramdas2023}.  An e-value is a nonnegative statistic that is bounded in mean by $1$ under the null hypothesis, and an e-process is a nonnegative supermartingale such that the stopped process is always an e-value regardless of the choice of stopping time. These e-values and e-processes serve as ``safer" alternatives to traditional p-values as a measure of evidence, providing robustness to practices such as optional continuation of data collection as well as greater robustness to model misspecification. Related to these themes is the work of \cite{grunwald2022} on ``safe probability'' more generally as it relates to the foundations of statistics and formulations of Bayesian, fiducial, and imprecise probabilistic ideas; see also \cite{williams2023}.

Of the statistical literature that exists for uncertainty quantification of ERMs in particular, much of it falls under the umbrella of the more general framework of M-estimation \citep{huber1981}. The statistical properties of M-estimators are well studied \citep[e.g.][]{maronna2006, geer2000,boos2018}; however, the theory of M-estimation typically only yields asymptotic frequentist guarantees that require strong consistency conditions or distributional assumptions. Another approach is found in \citet{hudson2021}, which studies hypothesis testing for the risk minimizer using a nonparametric extension of the classical score test. Unfortunately, the theory from \citet{hudson2021} requires smoothness conditions on the risk function, and again only yields asymptotic frequentist guarantees.  This problem is also studied in \citet{cella2022} via the framework of generalized inferential models, but is once again only able to provide approximate validity due to the need to appeal to asymptotic theory. Bayesian approaches to uncertainty quantification of the risk minimizer, such as the penalized exponentially tilted empirical likelihood posterior of \citet{tang2021} or the Gibbs-posterior inference approaches of \citet{martin2013} and \citet{bhattacharya2022}, are similarly only capable of giving asymptotic coverage guarantees. 

Another set of Bayesian approaches is characterized by the use of Gibbs posteriors (also sometimes called generalized Bayes posteriors) as studied in \citet{grunwald2017}, \citet{grunwald2020}, and \citet{wu2023}, among others. Once again, however, these approaches either ignore frequentist guarantees altogether, or fail to provably yield finite-sample coverage guarantees. Two approaches that do yield the guarantees of interest for risk minimizers include \citet{waudbysmith2023} and \citet{dey2024}; in fact, they go further and provide \textit{anytime-valid} coverage, in the sense that the coverage guarantees hold regardless of the stopping rule used for the data collection process, due to their use of e-processes for inference. However, both of these approaches require strong assumptions: \citet{waudbysmith2023} is only able to handle the $L^2$ loss for bounded random variables, whereas \citet{dey2024} requires a ``strong central condition" to hold for the data, and its efficiency guarantees require strictly stronger assumptions than those that will be used in this manuscript. It thus remains an open question: Can we obtain efficient, valid confidence sets and hypothesis tests for these risk minimizers under minimal assumptions on the data-generating process? In this paper, we answer this question in the affirmative for all machine learning models satisfying a relatively weak \textit{uniform convergence} assumption.

The primary insight of this paper is illustrated in \Cref{fig:basic_insight}: When performing inference on a risk minimizer $\theta_0$, we typically estimate it via an ERM $\thetahat_S$ that depends on a random sample $S$. We then quantify our uncertainty in the ERM by examining a neighborhood $\Thetahat_S^\varepsilon$ around $\thetahat_S$ of some size $\varepsilon$; ideally, this neighborhood would contain $\theta_0$, but it is typically unclear how large $\varepsilon$ needs to be to do so. The primary reason doing so is difficult is that the empirical risk may behave very differently from the population risk. A simple example of this appears when using a machine learning model that always overfits to match the data perfectly, so that the empirical risk on a sample is always zero despite the population risk being rather high---this may cause the distance between the ERM and risk minimizer to vary unpredictably across samples. However, if we can ensure that the empirical risk is, with high probability, sufficiently close to the population risk in a uniform manner over the parameter space $\Theta$, then this issue is solved and it is possible find an $\varepsilon$ such that $\Thetahat_S^\varepsilon$ contains $\theta_0$ with high probability. 
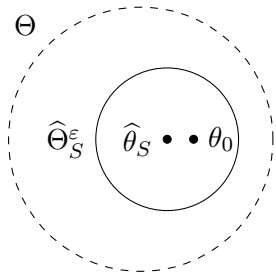
\begin{figure}
    \centering
\begin{tikzpicture}
    \def\radius{50}
    \def\radiuspt{\radius pt}
    \node[circle, draw, dashed, minimum size={2*\radiuspt}, label=above left:$\Theta$] (fullspace) at (0,0){};
    \node[circle, draw, fill=black, inner sep=0pt, minimum size = 3pt, label=right:$\theta_0$](theta_0) at ({2*\radiuspt/5}, 0){};
    \node[circle, draw, fill=black, inner sep=0pt, minimum size = 3pt, label=left:$\thetahat_S$](thetahat) at ({\radiuspt/5}, 0){};
    \node[circle, draw, minimum size={\radiuspt+4}, label=left:$\Thetahat_S^\varepsilon$, ](ci)at (thetahat){};
\end{tikzpicture}
    \caption{To quantify our uncertainty in the ERM $\thetahat_S$, we look at the size of the $\varepsilon$-neighborhood $\Thetahat_S^\varepsilon$ around $\thetahat_S$ that is intended to cover the risk minimizer $\theta_0$.}
    \label{fig:basic_insight}
\end{figure}

The contributions of our paper are as follows:
\begin{itemize}
    \item We provide a new framework for statistical inference that can be used to provide uncertainty quantification for a parameter of interest without knowledge of the data-generating distribution. 
    \item We use this framework to create confidence sets that contain the ``optimal" (i.e. risk minimizing) parameter of a machine learning model with any pre-specified level of confidence.
    \item We demonstrate that knowledge of the theoretical distribution of valid confidence sets can be used to determine the location of risk minimizer. 
    \item We draw on the ideas of imprecise probability theory \citep{dempster1967,shafer1976} to study this theoretical distribution of confidence sets. This is in part inspired by the growing use of imprecise probability in the study of both empirical risk minimization and statistical inference \citep[e.g.][]{kusunoki2021,hullermeier2014,jacob2021,gong2021}.
    \item We show that under weak conditions, bootstrapping can be used to estimate the distribution of these confidence sets. This allows us to assign a notion of confidence to arbitrary regions of the parameter space, even when the data-generating distribution is completely unknown. 
    \item We illustrate the utility of our framework in both synthetic and real data examples\footnote{The code to reproduce the experiments presented in this paper is available at \url{https://github.com/neil-dey/valid-inference-ml-estimators}}.
\end{itemize}

The remainder of the paper is structured as follows: In \Cref{sec:learning_problem}, we give context for the problem that machine learning aims to solve and background on classical machine learning theory that will be referenced throughout the paper. In \Cref{sec:framework}, we propose our new inferential framework and demonstrate how to construct valid confidence sets for the risk minimizer of a machine learning model. In \Cref{sec:distbn}, we examine how to assign confidence to arbitrary regions of the parameter space by using the theory of random sets and imprecise probability to study the distribution of our valid confidence sets. \Cref{sec:IMs} briefly discusses the related work of generalized Inferential Models and gives a comparison between the two approaches. We finish with concluding remarks and directions for future work in \Cref{sec:conclusion}.

\section{The Supervised Learning Problem}\label{sec:learning_problem}
For a machine learning task, one receives a sequence of examples $X_1, \ldots, X_m$ from an example domain $\mathcal{X}$ and corresponding labels $Y_1, \ldots, Y_m$ from a label space $\mathcal{Y}$; the task of interest is being able to predict the label $Y$ associated with a new, unseen example $X$. To do so, the practitioner selects a \textit{hypothesis class} $\mathcal{H}$ consisting of functions (referred to as hypotheses) $\mathcal{X}\rightarrow\mathcal{Y}$ and a loss function $L:\mathcal{Y}\times\mathcal{Y}\rightarrow\R^+$. The supervised learning problem is to choose the hypothesis that minimizes the expected loss over the (unknown) data-generating distribution.

A natural question is to ask when the supervised learning problem is tractable to solve, and what it formally means to solve it in the first place. One notion is given by the probably approximately correct (PAC) learning framework, introduced by \citet{valiant1984}. In this framework, we suppose that there exists a \textit{true concept} $c\in\H$ such that for any example $X$, its corresponding label is given by $c(X)$. The randomness in the training sample\footnote{In this paper, we define samples to consist of independent and identically distributed observations.} $S = \{(X_1, c(X_1)), \ldots, (X_m, c(X_m))\}$ typically prevents a guarantee that $c$ can be found exactly. Hence, the PAC-learning framework claims that a procedure ``successfully learns" if it can find, with high probability, a hypothesis ``close enough" to $c$ without needing an excessively large sample size. That is, PAC-learning requires that there exist an algorithm that probably finds a hypothesis that is approximately correct. Formally, we have the following definition:
\begin{definition}\label[definition]{def:pac}
A hypothesis class $\H$ is PAC-learnable with respect to a fixed loss function $L$ if there exists an learning process\footnote{formally, an algorithm $A:\bigcup_{i=1}^\infty (\X\times\Y)^i \rightarrow \H$, where the domain is the set of all samples of finite size} and a polynomial $p$ such that for every distribution $\D$ over $\mathcal{X}$, every $\varepsilon>0$, every $\delta>0$, and every concept $c\in\H$,
\begin{equation*}
    \Pr_{S\sim\D^m}\qty[\E_{X\sim \D}\qty[L(\widehat{h}_S(X), c(X))] \leq \varepsilon] \geq 1 - \delta
\end{equation*}
whenever the sample size $m$ is at least $p(\varepsilon\inv, \delta\inv)$, where $\widehat{h}_S$ is the output of the learning process when given the sample $S$ as an input.
\end{definition}

The PAC-learning framework was later was later extended to the agnostic PAC-learning model introduced by \citet{kearns1994}, generalizing to no longer require that there exist a true concept $c$ that generates the labels. Instead, samples $S = ((X_1, Y_1), \ldots, (X_m, Y_m))$ can be generated from any distribution $\D$ over $\mathcal{X}\times\mathcal{Y}$. Although there need not be a true concept in $\H$, there still exists some lower bound on how badly the hypotheses in $\H$ perform; this lower bound now acts as the target to aim for. Thus, the agnostic PAC-learning framework defines a procedure to ``successfully learn" if it can probably find a hypothesis that is approximately as good as the best that $\H$ can possibly perform. Formally:
\begin{definition}
Let $\H$ be a hypothesis class and $L$ a loss function. The risk of a hypothesis $h\in\H$ on a distribution $\D$ over $\mathcal{X}\times\mathcal{Y}$ is the expected loss of $h$:
\begin{equation*}
    R(h) = \E_{(X, Y)\sim \D}[L(h(X), Y)].
\end{equation*}
\end{definition}
\begin{definition}\label[definition]{def:agpac}
A hypothesis class $\H$ is agnostically PAC-learnable with respect to a fixed loss function $L$ if there exists a learning process and a polynomial $p$ such that for every distribution $\D$ over $\mathcal{X}\times\mathcal{Y}$, every $\varepsilon>0$, and every $\delta>0$, 
\begin{equation*}
    \Pr_{S\sim\D^m}\qty[R(\widehat{h}_S) \leq \inf_{h\in \H} R(h) + \varepsilon] \geq 1 - \delta
\end{equation*}
whenever $m \geq p(\varepsilon\inv, \delta\inv)$, where $\widehat{h}_S$ is the output of the learning process  when given the sample $S$ as an input.
\end{definition}

The PAC-learning and agnostic PAC-learning frameworks thus give criteria to determine when the supervised learning problem is tractable. However, they do not tell us how to solve the problem. Although we wish to find $h\in\H$ that minimizes $R(h)$, we do not know the data-generating distribution $\D$ to directly minimize $R$; hence, practitioners will instead minimize the empirical risk on the given sample $S$:
\begin{equation*}
    \rhat_S(h) = \frac{1}{m}\sum_{i=1}^m L(h(X_i), Y_i).
\end{equation*}
The hypothesis that minimizes $\rhat_S$ is then used as an approximation to the hypothesis that minimizes $R$; if one can guarantee that $\rhat_S$ is approximately the same as $R$ for large enough sample sizes, this yields an alternative criterion for the existence of a solution to the supervised learning problem. This idea is formalized by the notion of the Glivenko-Cantelli property:
\begin{definition}\label[definition]{def:sugc}
A hypothesis class $\H$ is a strong uniform Glivenko-Cantelli class with respect to a fixed loss function if for every $\varepsilon > 0$,
\begin{equation*}
    \lim_{n\rightarrow\infty}\sup_{\D}\Pr_{S}\qty[\sup_{m \geq n}\sup_{h\in\H}\qty|\rhat_S(h) - R(h)| > \varepsilon] = 0,
\end{equation*}
where the outermost supremum is understood to be over all possible distributions over $\X\times\Y$, and $S$ is a sample of size $m$ from $\D$. 
\end{definition}

\begin{quote}
\begin{remark}\normalfont{
All these notions of learning turn out to be related. Consider the class\footnote{We assume in this discussion that $L\circ\H$ is image-admissible Suslin; c.f. \citet[page 101]{dudley1984}}
\begin{equation*}
    L\circ \H = \{(x, y)\mapsto L(h(x), y) \setst h\in\H\}.
\end{equation*}
If $\H$ is a binary hypothesis class and $L$ is the zero-one loss function, then the works of \citet{vapnik1971} and \citet{assound1989} (see also \citealt{dudley1991} for an alternative, unified proof) give that $\H$ is a strong uniform Glivenko-Cantelli class with respect to $L$ if and only if $L\circ\H$ has finite VC dimension. Furthermore, \citet{vapnik1971} and \citet{blumer1989} together show that in this setting, $\H$ with respect to $L$ is agnostically PAC-learnable if and only if it is PAC-learnable if and only if it has finite VC dimension; that is to say that all of these notions are equivalent for binary hypothesis classes. This result was later generalized to real-valued hypothesis classes $\H$ and general loss functions $L$ by \citet{alon1997}, stating that $\H$ is a strong uniform Glivenko-Cantelli class with respect to $L$ if and only if $L\circ\H$ has finite $\gamma$-fat shattering dimension at all scales $\gamma > 0$, which in turn implies agnostic PAC-learnability.
}\end{remark}
\end{quote}

\section{An Inferential Framework for Machine Learning}\label{sec:framework}
We propose the following machine learning (ML) framework for inference: Data $(X_i, Y_i)_{i=1}^m$ are generated i.i.d. from an unknown \textit{data-generating} distribution $\D$ over the sample space $\X\times\Y$. We then fit a \textit{working model} $Y = h(X\altgiven \theta)$, where the set of candidate \textit{hypotheses} $\H = \{x\mapsto h(x\altgiven \theta) \setst \theta\in\Theta\}$ is known. In contexts where the model is not presented any ``inputs" to learn labels from, we take the convention\footnote{One may expect the convention $\X = \varnothing$ to be more natural, but the absence of examples is not the same as an empty example space.
}
that $\X = \{\varnothing\}$. In an ML context, $\theta$ can be thought of as the vector of parameters that is to be ``learned" during the training of the model. As an example, if $\H$ were a class of neural networks, $\theta$ would be a vector of weights and biases corresponding to a particular trained model. We wish to find $\theta_0$ that best fits the generating distribution, in the sense that
\begin{equation*}
	\theta_0 := \argmin_{\theta\in\Theta} R(\theta) \equiv  \argmin_{\theta\in\Theta}\E_{(X, Y)\sim \D}\qty[L(h(X\altgiven \theta), Y)] 
\end{equation*}
for some fixed loss function $L:\Y\times\Y\rightarrow\R$. Such a $\theta_0$ is referred to as a \textit{risk minimizer}. Together, we call the pair $(\H, L)$ the ML model.

It is typical to estimate $\theta_0$ via an estimator $\thetahat_S$ found by empirical risk minimization:
\begin{equation*}
    \thetahat_S := \argmin_{\theta\in\Theta} \rhat_S(\theta) \equiv \argmin_{\theta\in\Theta}\frac{1}{m}\sum_{i=1}^m L(h(X_i\altgiven\theta), Y_i)
\end{equation*}
where $S = ((X_1, Y_1), \ldots, (X_m, Y_m)) \sim \D^m$ is the observed sample. We call such a $\thetahat_S$ an \textit{empirical risk minimizer} (ERM). A general property of ``learnable" ML models relating the risk and empirical risk is the \textit{uniform convergence property}:
\begin{definition}\label[definition]{def:ucp}
An ML model $(\H, L)$ has the uniform convergence property with respect to the data-generating distribution $\D$ over $\X\times \Y$ if there exists a function $f:\R^+\times\R^+\rightarrow \R$ such that for any $\varepsilon > 0$ and $\alpha > 0$, if $m \geq f(\varepsilon, \alpha)$ then 
\begin{equation*}
    \Pr_{S\sim\D^m}\qty[\sup_{\theta\in\Theta}|R(\theta) - \rhat_S(\theta)| \leq \varepsilon] \geq 1-\alpha.
\end{equation*}
We call any such $f$ a \textit{witness} to the uniform convergence property.
\end{definition}


\begin{quote}
\begin{remark}\normalfont{
Notice that \Cref{def:ucp} is a much weaker assumption than that of being a strong uniform Glivenko-Cantelli class (\Cref{def:sugc}); it follows that all image-admissible Suslin ML models with finite VC dimension or $\gamma$-fat-shattering dimension for all $\gamma > 0$ have the uniform convergence property with respect to \textit{any} distribution $\D$. We thus see that a broad class of models of interest fulfill this requirement.
}\end{remark}
\end{quote}

\begin{definition}\label[definition]{def:ucf}
Let $(\H, L)$ have the uniform convergence property, and let $\mathcal{W}$ be the set of all corresponding witnesses.
The uniform convergence function of $(\H, L)$ is defined to be
\begin{equation*}
    f(\varepsilon, \alpha) = \inf_{w\in\mathcal{W}} \lceil w(\varepsilon, \alpha)\rceil,
\end{equation*}
where $\lceil\,\cdot\,\rceil$ is the ceiling function.
\end{definition}
In other words, the uniform convergence function of $(\H, L)$ is the smallest integer-valued function that witnesses the uniform convergence property. 
\begin{quote}
\begin{remark}\normalfont{
    For image-admissible Suslin classes of finite VC or $\gamma$-fat-shattering dimension, upper bounds on the worst-possible asymptotic behavior of the uniform convergence function are well-known, since this is simply the rate of convergence for the strong Glivenko-Cantelli class. For finite samples, these bounds can be obtained via tools such as the Rademacher complexity or covering numbers.
}\end{remark}
\end{quote}

Though it is typical to only examine ERMs for prediction problems, we find that for inference problems, it is useful to look at parameters that are \textit{almost} ERMs:
\begin{definition}
Let $(\H, L)$ be an ML model and $S$ a given sample. The set of $\varepsilon$-almost ERMs ($\varepsilon$-AERMs) is defined to be \begin{equation*}
	\Thetahat_S^\varepsilon = \{\theta\in\Theta \setst \rhat_S(\theta) \leq \inf_{\vartheta\in\Theta} \rhat_S(\vartheta) + \varepsilon\}.
\end{equation*}
\end{definition}
It turns out that this set of $\varepsilon$-AERMs acts as a valid confidence set for $\theta_0$:

\begin{theorem}\label{thm:eseps_ci}
Let $(\H, L)$ have uniform convergence function $f$. Suppose that the risk minimizer $\theta_0$ exists. Then $\Thetahat_S^\varepsilon$ is a $1-\alpha$ level confidence set for $\theta_0$ if $m\geq f(\varepsilon/2, \alpha)$.
\end{theorem}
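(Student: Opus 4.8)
The plan is to show directly that $\Pr_{S\sim\D^m}\qty[\theta_0\in\Thetahat_S^\varepsilon]\geq 1-\alpha$, where $\D$ is the data-generating distribution with respect to which $(\H,L)$ has the uniform convergence property (so that the coverage claim is the usual frequentist repeated-sampling statement for that $\D$). Everything hinges on the ``good event''
\[
  E := \qty{\sup_{\theta\in\Theta}\qty|R(\theta)-\rhat_S(\theta)|\leq \varepsilon/2}.
\]
By \Cref{def:ucf} and the hypothesis $m\geq f(\varepsilon/2,\alpha)$, we have $\Pr_{S\sim\D^m}[E]\geq 1-\alpha$, so it suffices to prove the purely deterministic implication $E\implies\theta_0\in\Thetahat_S^\varepsilon$.

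To do that I would split the $\varepsilon/2$ budget in two. First, on $E$ the empirical risk at $\theta_0$ is bounded above: $\rhat_S(\theta_0)\leq R(\theta_0)+\varepsilon/2$. Second, on $E$ the empirical risk is bounded below everywhere by $\rhat_S(\vartheta)\geq R(\vartheta)-\varepsilon/2\geq R(\theta_0)-\varepsilon/2$ for all $\vartheta\in\Theta$, where the last step uses that $\theta_0$ minimizes $R$ (which exists by assumption); taking the infimum over $\vartheta$ gives $\inf_{\vartheta\in\Theta}\rhat_S(\vartheta)\geq R(\theta_0)-\varepsilon/2$. Chaining the two bounds,
\[
  \rhat_S(\theta_0)\leq R(\theta_0)+\varepsilon/2\leq \inf_{\vartheta\in\Theta}\rhat_S(\vartheta)+\varepsilon,
\]
which is precisely the defining condition for $\theta_0\in\Thetahat_S^\varepsilon$. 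Combining with $\Pr[E]\geq 1-\alpha$ finishes the proof.

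I do not expect any genuine obstacle here; the content is just the triangle-type inequality closing the loop once the $\varepsilon/2$ is split. The only points worth a sentence of care are that the infimum over $\Theta$ need not be attained, so the argument should be written throughout with infima rather than minima (harmless, since all the inequalities above survive taking infima), and that $f$ — hence the required sample size — depends on $\D$, so the guarantee is pointwise in $\D$. I would also note explicitly that the argument never uses any property of $\thetahat_S$ itself, only of the risk landscape, which is exactly why the overfitting pathology described in the introduction does not invalidate the coverage statement.
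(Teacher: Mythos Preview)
Your argument is correct and follows the same overall strategy as the paper's proof: condition on the uniform-convergence event with tolerance $\varepsilon/2$, then chain two $\varepsilon/2$ bounds through $R(\theta_0)$ using that $\theta_0$ minimizes $R$. The one difference is in how the empirical infimum is handled. The paper introduces approximate minimizers $\theta_\zeta$ with $\rhat_S(\theta_\zeta)\leq \inf_\vartheta\rhat_S(\vartheta)+\zeta$, bounds $\rhat_S(\theta_0)-\inf_\vartheta\rhat_S(\vartheta)$ via a telescoping sum through $R(\theta_\zeta)$, and then takes $\zeta\downarrow 0$. Your route is more direct: from $\rhat_S(\vartheta)\geq R(\vartheta)-\varepsilon/2\geq R(\theta_0)-\varepsilon/2$ for every $\vartheta$ you pass straight to $\inf_\vartheta\rhat_S(\vartheta)\geq R(\theta_0)-\varepsilon/2$ without ever naming a near-minimizer. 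This sidesteps the $\theta_\zeta$ device entirely and is a bit cleaner; the paper's version has the mild advantage that its telescoping template recurs verbatim in the proof of the next theorem (where both the population and empirical minimizers may fail to exist), so the extra machinery is not wasted there.
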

\begin{proof}
We have by definition that 
\begin{equation*}
    \Pr[\theta_0 \in \Thetahat_S^\varepsilon] = \Pr[\rhat_S(\theta_0) \leq \inf_{\vartheta\in\Theta}\rhat_S(\vartheta) + \varepsilon].
\end{equation*}
Next, note by definition of the infimum that for every $\zeta>0$ there exists $\theta_\zeta\in\Theta$ such that $\rhat_S(\theta_\zeta) - \inf_{\vartheta\in\Theta}\rhat_S(\vartheta) \leq \zeta$. 
Furthermore, we have by the uniform convergence property that if $m \geq f(\varepsilon/2, \alpha)$, then
\begin{equation*}
	\Pr[\sup_{\vartheta\in\Theta}|\rhat_S(\vartheta) - R(\vartheta)| \leq \frac{\varepsilon}{2}] \geq 1-\alpha.
\end{equation*}
Now if $\sup_{\vartheta\in\Theta}|\rhat_S(\vartheta) - R(\vartheta)| \leq \varepsilon/2$, then for every $\theta\in\Theta,\enspace|\rhat_S(\theta_0) - R(\theta_0)| + |\rhat_S(\theta) - R(\theta)| \leq \varepsilon$; this is true in particular for every $\theta_\zeta$, so
\begin{equation}\label[inequality]{eq:ucpzeta}
	\Pr[|\rhat_S(\theta_0) - R(\theta_0)| + \inf_{\zeta>0}|\rhat_S(\theta_\zeta) - R(\theta_\zeta)| \leq \varepsilon] \geq 1-\alpha.
\end{equation}
Next, we have for every $\zeta > 0$ that
\begin{flalign*}
	&&&\rhat_S(\theta_0) - \inf_{\vartheta\in\Theta}\rhat_S(\vartheta) &\\
		&&&= \rhat_S(\theta_0) - R(\theta_0) + R(\theta_0) - R(\theta_\zeta) + R(\theta_\zeta) - \rhat_S(\theta_\zeta) + \rhat_S(\theta_\zeta) - \inf_{\vartheta\in\Theta}\rhat_S(\vartheta) &\\
		&&&= \qty[\rhat_S(\theta_0) - R(\theta_0)] - \qty[R(\theta_\zeta) - R(\theta_0)] + \qty[R(\theta_\zeta) - \rhat_S(\theta_\zeta)] + \qty[\rhat_S(\theta_\zeta) - \inf_{\vartheta\in\Theta}\rhat_S(\vartheta)] &(\text{Regrouping})\\
		&&&\leq |\rhat_S(\theta_0) - R(\theta_0)| - 0 + |R(\theta_\zeta) - \rhat_S(\theta_\zeta)| + \qty[\rhat_S(\theta_\zeta) - \inf_{\vartheta\in\Theta}\rhat_S(\vartheta)] & (\theta_0\text{ minimizes } R)\\
		&&&\leq |\rhat_S(\theta_0) - R(\theta_0)| + |R(\theta_\zeta) - \rhat_S(\theta_\zeta)|  + \zeta. & (\text{Definition of }\theta_\zeta)
\end{flalign*}
Taking the infimum over $\zeta > 0$, we thus have that
\begin{equation}\label[inequality]{eq:theta0empriskbound}
	\rhat_S(\theta_0) - \inf_{\theta\in\Theta}\rhat_S(\theta)  \leq |\rhat_S(\theta_0) - R(\theta_0)| + \inf_{\zeta >0} |R(\theta_\zeta) - \rhat_S(\theta_\zeta)|.
\end{equation}
Hence, we have by combining inequalities \eqref{eq:ucpzeta} and \eqref{eq:theta0empriskbound} that
\begin{equation*}
	\Pr[\rhat_S(\theta_0) - \inf_{\vartheta\in\Theta}\rhat_S(\vartheta) \leq \varepsilon] \geq \Pr[|\rhat_S(\theta_0) - R(\theta_0)| + \inf_{\zeta > 0}|R(\theta_\zeta) - \rhat_S(\theta_\zeta)| \leq \varepsilon] \geq 1-\alpha.
\end{equation*}
as desired.
\end{proof}
The above theorem has the following intuition: The ERM should be close to the risk minimizer with high probability due to the uniform convergence property, so looking at some sufficiently large $\varepsilon$-neighborhood of the ERM ought to capture the risk minimizer with high probability.

While it is often the case that $\theta_0 = \argmin_{\theta\in\Theta} R(\theta)$ exists, this is not always the case. If $\Theta$ is compact (with respect to the topology induced by the pseudometric $d(x, y) = |R(x) - R(y)|$), then it is necessarily the case that the risk minimizer $\theta_0$ exists. However, when $\Theta$ is not compact, $\theta_0$ may not exist---there only exists a neighborhood around $\theta_0$ on the boundary of $\Theta$. Thus, we instead aim to cover the neighborhood
    \[\Theta_0^{\delta} = \{\theta\in\Theta \setst R(\theta) \leq \inf_{\vartheta\in\Theta}R(\vartheta) + \delta\}\] 
for some $\delta \geq 0$. Note that the case $\delta = 0$ is of interest when $\theta_0$ does exist (since then $\Theta_0^0 = \{\theta_0\}$) or is not unique (as is often the case in models such as neural networks, which present multiple global minima). \Cref{fig:paramspacenotcompact} illustrates this phenomenon.
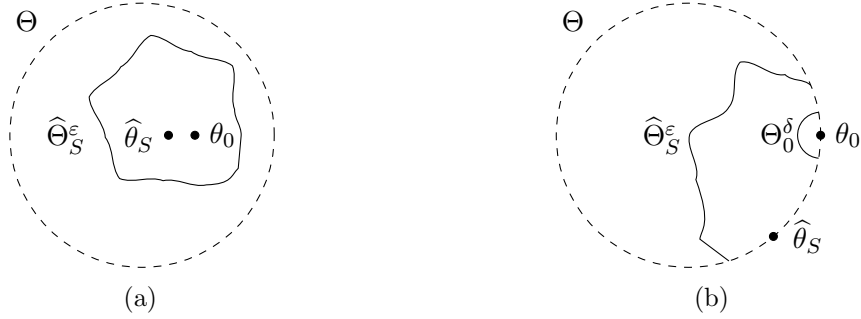
\begin{figure}[tbp]
\centering
\begin{subfigure}[t]{0.49\textwidth}
\centering
\begin{tikzpicture}
    \def\radius{50}
    \def\radiuspt{\radius pt}
    \node[circle, draw, dashed, minimum size={2*\radiuspt}, label=above left:$\Theta$] (fullspace) at (0,0){};
    \node[circle, draw, fill=black, inner sep=0pt, minimum size = 3pt, label=right:$\theta_0$](theta_0) at ({2*\radiuspt/5}, 0){};
    \node[circle, draw, fill=black, inner sep=0pt, minimum size = 3pt, label=left:$\thetahat_S$](thetahat) at ({\radiuspt/5}, 0){};
    \node[circle, draw, minimum size={\radiuspt+5}, label=left:$\Thetahat_S^\varepsilon$, decorate, decoration={snake, segment length={10pt*pi}}](ci)at (thetahat){};
\end{tikzpicture}
\caption{}
\end{subfigure}
%
\begin{subfigure}[t]{0.49\textwidth}
\centering
\begin{tikzpicture}
    \def\radius{50}
    \def\radiuspt{\radius pt}
    \node[circle, draw, dashed, minimum size=2*\radiuspt, label=above left:$\Theta$] (fullspace) at (0,0){};
    
    \node[circle, draw, fill=black, inner sep=0pt, minimum size = 3pt, label=right:$\theta_0$](theta_0) at (\radiuspt, 0){};
    
    \def\angle{10};
    \draw ({\radiuspt*cos(\angle)}, {\radiuspt*sin(\angle)}) arc[start angle={90+\angle/2}, end angle={270-\angle/2}, radius={\radiuspt*sin(\angle)/sin((180-\angle)/2)}] node[midway] {$\Theta_0^\delta\hspace{15pt}$};
    
    \def\angle{-25}
    \def\nbhdrad{40}
    \def\b{acos(\nbhdrad/(2*\radius))}
    
    \node[circle, draw, fill=black, inner sep=0pt, minimum size = 3pt, label=right:$\,\thetahat_S$](thetahat) at ({\radiuspt*cos(2*\angle)}, {\radiuspt*sin(2*\angle)}){};
    
    
    \draw[decorate, decoration={snake, segment length={10pt*pi}, amplitude=4pt}] ({\radiuspt*cos(\angle + 180-2*\b)}, {\radiuspt*sin(\angle + 180-2*\b)}) to [out=135, in=180] ({{\radiuspt*cos(\angle - (180-2*\b))}}, {{\radiuspt*sin(\angle - (180-2*\b))}}) node[midway]{$\Thetahat_S^\varepsilon\hspace{20pt}$};
\end{tikzpicture}
\caption{}
\end{subfigure}
\caption{Illustrations of the desired behavior of $\Thetahat_S^\varepsilon$ when the parameter space $\Theta$ is not compact. In (a), the risk minimizer $\theta_0$ and the ERM $\thetahat_S$ both exist; $\Thetahat_S^\varepsilon$ thus includes $\theta_0$. In (b), neither $\theta_0$ nor $\thetahat$ exist in the parameter space; $\Thetahat_S^\varepsilon$ instead contains the closed ball $\Theta_0^\delta$.}
\label{fig:paramspacenotcompact}
\end{figure}

The following theorem demonstrates that $\Thetahat_S^\varepsilon$ does indeed remain a valid confidence set for any such $\Theta_0^\delta$ (so long as $\varepsilon$ is chosen large enough).

\begin{theorem}\label{thm:eseps_bigcs}
Let $(\H, L)$ have uniform convergence function $f$. Then $\Thetahat_S^{\varepsilon}$ is a $1-\alpha$ level confidence set for $\Theta_0^\delta$ if $\delta\leq\varepsilon$ and $m \geq f((\varepsilon - \delta)/2, \alpha)$.
\end{theorem}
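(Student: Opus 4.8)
The plan is to follow the proof of \Cref{thm:eseps_ci} almost verbatim, replacing the single risk minimizer $\theta_0$ by an arbitrary element of $\Theta_0^\delta$ and carrying the extra slack $\delta$ through the triangle-inequality bookkeeping. First I would fix the reading of the statement: ``$\Thetahat_S^\varepsilon$ is a $1-\alpha$ level confidence set for $\Theta_0^\delta$'' means $\Pr_{S\sim\D^m}[\Theta_0^\delta\subseteq\Thetahat_S^\varepsilon]\geq 1-\alpha$, so it suffices to exhibit a single high-probability event on which the inclusion $\Theta_0^\delta\subseteq\Thetahat_S^\varepsilon$ is forced. The hypothesis $\delta\leq\varepsilon$ is used precisely to guarantee $(\varepsilon-\delta)/2\geq 0$, so that $f((\varepsilon-\delta)/2,\alpha)$ is a legitimate input to the uniform convergence function.

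The concrete steps: invoke the uniform convergence property with tolerance $(\varepsilon-\delta)/2$, so that for $m\geq f((\varepsilon-\delta)/2,\alpha)$ the event $E=\{\sup_{\vartheta\in\Theta}|\rhat_S(\vartheta)-R(\vartheta)|\leq(\varepsilon-\delta)/2\}$ has probability at least $1-\alpha$. Then fix a sample $S\in E$ and an arbitrary $\theta\in\Theta_0^\delta$; I want to show $\theta\in\Thetahat_S^\varepsilon$. As in the earlier proof, for each $\zeta>0$ choose $\theta_\zeta\in\Theta$ with $\rhat_S(\theta_\zeta)-\inf_{\vartheta\in\Theta}\rhat_S(\vartheta)\leq\zeta$, and decompose by telescoping
\begin{align*}
\rhat_S(\theta)-\inf_{\vartheta\in\Theta}\rhat_S(\vartheta)
&=[\rhat_S(\theta)-R(\theta)]+[R(\theta)-\inf_{\vartheta}R(\vartheta)]\\
&\quad+[\inf_{\vartheta}R(\vartheta)-R(\theta_\zeta)]+[R(\theta_\zeta)-\rhat_S(\theta_\zeta)]+[\rhat_S(\theta_\zeta)-\inf_{\vartheta}\rhat_S(\vartheta)].
\end{align*}
The five brackets are bounded, respectively, by $(\varepsilon-\delta)/2$ (on $E$), by $\delta$ (definition of $\Theta_0^\delta$), by $0$ (since $R(\theta_\zeta)\geq\inf_\vartheta R(\vartheta)$), by $(\varepsilon-\delta)/2$ (on $E$), and by $\zeta$ (choice of $\theta_\zeta$); adding these gives $\rhat_S(\theta)-\inf_\vartheta\rhat_S(\vartheta)\leq\varepsilon+\zeta$. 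Taking the infimum over $\zeta>0$ yields $\rhat_S(\theta)\leq\inf_\vartheta\rhat_S(\vartheta)+\varepsilon$, i.e. $\theta\in\Thetahat_S^\varepsilon$.

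Since $\theta\in\Theta_0^\delta$ was arbitrary, on the event $E$ we have $\Theta_0^\delta\subseteq\Thetahat_S^\varepsilon$, whence $\Pr[\Theta_0^\delta\subseteq\Thetahat_S^\varepsilon]\geq\Pr(E)\geq1-\alpha$. I do not expect a genuine obstacle here; the only subtlety worth a sentence is that the auxiliary points $\theta_\zeta$ depend on $S$, but because the argument is run sample-by-sample on the fixed event $E$ this is harmless, and the $\zeta$-approximation is needed only because the empirical infimum may fail to be attained — exactly as in \Cref{thm:eseps_ci}, which is recovered by setting $\delta=0$ (modulo the trivial difference between covering $\{\theta_0\}$ and covering the set of exact risk minimizers).
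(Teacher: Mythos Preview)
Your proof is correct and follows essentially the same approach as the paper: the same uniform-convergence event, the same telescoping decomposition, and the same $\zeta$-approximation for the empirical infimum. If anything, your version is slightly more direct, since the paper additionally introduces an $\eta$-approximation $\theta_\eta$ for $\sup_{\theta_0\in\Theta_0^\delta}\rhat_S(\theta_0)$, whereas your pointwise argument (fix an arbitrary $\theta\in\Theta_0^\delta$ and show $\theta\in\Thetahat_S^\varepsilon$ on the event $E$) renders that extra approximation unnecessary.
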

\begin{proof}
We have that
\begin{equation*}
    \Pr\qty[\Theta_0^\delta \subseteq \Thetahat_S^\varepsilon] = \Pr\qty[\sup_{\theta_0 \in \Theta_0^\delta}\rhat_S(\theta_0) \leq \inf_{\vartheta\in\Theta} \rhat_S(\vartheta) + \varepsilon].
\end{equation*}
As in the proof of \Cref{thm:eseps_ci}, we have by the definition of the infimum that for every $\zeta > 0$ there exists $\theta_\zeta \in \Theta$ such that $\rhat_S(\theta_\zeta) - \inf_{\vartheta\in\Theta}\rhat_S(\vartheta) \leq \zeta$. Furthermore, by definition of the supremum we have that for every $\eta > 0$ there exists $\theta_\eta \in \Theta_0^\delta$ such that $\sup_{\theta_0\in\Theta_0^\delta}\rhat_S(\theta_0) - \rhat_S(\theta_\eta) \leq \eta$. Then similarly to the argument in \Cref{thm:eseps_ci}, we have by the uniform convergence property that
\begin{equation}\label[inequality]{eq:etazetaucp}
    \Pr\qty[\inf_{\eta > 0}|\rhat_S(\theta_\eta) R(\theta_\eta)| + \inf_{\zeta > 0}|\rhat_S(\theta_\zeta) - \rhat(\theta_\zeta)| \leq \varepsilon - \delta] \geq 1-\alpha.
\end{equation}
Then for every $\eta > 0$ and $\zeta > 0$,
\begin{align*}
    &\sup_{\theta_0\in\Theta_0^{\delta}}\rhat_S(\theta_0) - \inf_{\vartheta\in\Theta} \rhat_S(\vartheta) \\
    &=\sup_{\theta_0\in\Theta_0^{\delta}}\rhat_S(\theta_0) - \rhat_S(\theta_\eta) + \rhat_S(\theta_\eta) - \rhat_S(\theta_\zeta) + \rhat_S(\theta_\zeta) - \inf_{\vartheta\in\Theta} \rhat_S(\vartheta) \\
    &\leq \eta + \zeta + \rhat_S(\theta_\eta) - \rhat_S(\theta_\zeta) \\
    &= \eta + \zeta + \rhat_S(\theta_\eta) - R(\theta_\eta) + R(\theta_\eta) - R(\theta_\zeta) + R(\theta_\zeta) - \rhat_S(\theta_\zeta) \\
    &\leq \eta + \zeta + |\rhat_S(\theta_\eta) - R(\theta_\eta)| + R(\theta_\eta) - R(\theta_\zeta) + |R(\theta_\zeta) - \rhat_S(\theta_\zeta)|
\end{align*}
Now recall that since $\theta_\eta \in \Theta_0^\delta$, we have that $R(\theta_\eta) \leq \inf_{\vartheta \in \Theta} R(\vartheta) + \delta$:
\begin{align*}
    &\sup_{\theta_0\in\Theta_0^{\delta}}\rhat_S(\theta_0) - \inf_{\vartheta\in\Theta} \rhat_S(\vartheta) \\
    &\leq \eta + \zeta + |\rhat_S(\theta_\eta) - R(\theta_\eta)| + |R(\theta_\zeta) - \rhat_S(\theta_\zeta)| + \inf_{\vartheta \in \Theta} R(\vartheta) + \delta - R(\theta_\zeta) \\
    &\leq \eta + \zeta + |\rhat_S(\theta_\eta) - R(\theta_\eta)| + |R(\theta_\zeta) - \rhat_S(\theta_\zeta)| + \delta.
\end{align*}
Taking the infimum over $\eta > 0$ and $\zeta > 0$, we thus find that
\begin{equation}\label[inequality]{eq:etazetadeltabound}
    \sup_{\theta_0\in\Theta_0^{\delta}}\rhat_S(\theta_0) - \inf_{\vartheta\in\Theta} \rhat_S(\vartheta) \leq \inf_{\eta > 0}|\rhat_S(\theta_\eta) R(\theta_\eta)| + \inf_{\zeta > 0}|\rhat_S(\theta_\zeta) - \rhat(\theta_\zeta)| + \delta.
\end{equation}
Hence, combining inequalities \eqref{eq:etazetaucp} and \eqref{eq:etazetadeltabound} yields that
\begin{align*}
    &\Pr[\sup_{\theta_0\in\Theta_0^{\delta}}\rhat_S(\theta_0) - \inf_{\vartheta\in\Theta} \rhat_S(\vartheta) \leq \varepsilon] \\
    &\geq \Pr\qty[\inf_{\eta > 0}|\rhat_S(\theta_\eta) R(\theta_\eta)| + \inf_{\zeta > 0}|\rhat_S(\theta_\zeta) - \rhat(\theta_\zeta)| \leq \varepsilon - \delta]\\ 
    &\geq 1-\alpha
\end{align*}
as desired.
\end{proof}

We note that \Cref{thm:eseps_ci,thm:eseps_bigcs} on their own may appear weak, as they present no upper bound on the size of $\Thetahat_S^\varepsilon$; indeed, simply choosing $\Thetahat_S^\infty = \Theta$ trivially acts as a $1-\alpha$ level confidence set but has no statistical power. The following theorem and corollary analyze the power of our proposed confidence sets by demonstrating that they are bounded to a reasonable size and shrink to contain only $\Theta_0^0$ as the sample size grows:

\begin{theorem}
Let $(\H, L)$ have uniform convergence function $f$. If $m \geq f(\varepsilon/2, \alpha)$, then 
\begin{equation*}
    \Pr_{S\sim\D^m}[\Thetahat^{\varepsilon}_S \subseteq \Theta_0^{2\varepsilon}] \geq 1-\alpha
\end{equation*}
Thus, our confidence set is bounded as $\Theta_0^0 \subseteq \Thetahat_S^\varepsilon \subseteq \Theta_0^{2\varepsilon}$ with probability at least $1-2\alpha$.
\end{theorem}
\begin{proof}
We want to show that for every $\theta\not\in\Theta_0^{2\varepsilon}$, $\rhat_S(\theta) > \inf_{\vartheta\in\Theta} \rhat_S(\vartheta) + \varepsilon$ with probability at least $1-\alpha$, or equivalently that
\begin{equation*}
    \Pr_{S\sim\D^m}\qty[\inf_{\theta\not\in\Theta_0^{2\varepsilon}}\rhat_S(\theta) - \inf_{\vartheta\in\Theta} \rhat_S(\vartheta) >  \varepsilon] \geq 1-\alpha
\end{equation*}
As in the previous theorems, for every $\gamma > 0$ there exists $\theta_\gamma \in \Theta$ such that $R(\theta_\gamma) \leq \inf_{\vartheta} R(\vartheta) + \gamma$. Hence, for every $\gamma > 0$:
\begin{align*}
& \inf_{\theta\not\in\Theta_0^{2\varepsilon}}\rhat_S(\theta) - \inf_{\vartheta\in\Theta} \rhat_S(\vartheta) \\
&= \inf_{\theta\not\in\Theta_0^{2\varepsilon}}\qty[\rhat_S(\theta) - R(\theta) + R(\theta)] -\inf_{\vartheta\in\Theta} R(\theta) +\inf_{\vartheta\in\Theta} R(\theta) - \inf_{\vartheta\in\Theta} \rhat_S(\vartheta) \\
&\geq \inf_{\theta\not\in\Theta_0^{2\varepsilon}}\qty[\rhat_S(\theta) - R(\theta)] + \qty[\inf_{\theta\not\in\Theta_0^{2\varepsilon}}R(\theta) - \inf_{\vartheta\in\Theta} R(\theta)] +\inf_{\vartheta\in\Theta} R(\theta) - \inf_{\vartheta\in\Theta} \rhat_S(\vartheta) \\
&\geq \inf_{\theta\not\in\Theta_0^{2\varepsilon}}\qty[\rhat_S(\theta) - R(\theta)] + 2\varepsilon +\inf_{\vartheta\in\Theta} R(\theta) - \inf_{\vartheta\in\Theta} \rhat_S(\vartheta) \\
&= \inf_{\theta\not\in\Theta_0^{2\varepsilon}}\qty[\rhat_S(\theta) - R(\theta)] + 2\varepsilon +\qty[\inf_{\vartheta\in\Theta} R(\theta) - R(\theta_\gamma)] + \qty[R(\theta_\gamma) - \rhat_S(\theta_\gamma)] + \qty[\rhat_S(\theta_\gamma)  - \inf_{\vartheta\in\Theta} \rhat_S(\vartheta)] \\
&\geq \inf_{\theta\not\in\Theta_0^{2\varepsilon}}\qty[\rhat_S(\theta) - R(\theta)] + 2\varepsilon - \gamma + \qty[R(\theta_\gamma) - \rhat_S(\theta_\gamma)] + 0
\end{align*}
Taking the supremum over $\gamma > 0$ yields
\begin{equation*}
    \inf_{\theta\not\in\Theta_0^{2\varepsilon}}\rhat_S(\theta) - \inf_{\vartheta\in\Theta} \rhat_S(\vartheta) \geq \inf_{\theta\not\in\Theta_0^{2\varepsilon}}\qty[\rhat_S(\theta) - R(\theta)] + \sup_{\gamma > 0} \qty[R(\theta_\gamma) - \rhat_S(\theta_\gamma)] + 2\varepsilon.
\end{equation*}
By similar arguments as the previous theorems, uniform convergence yields that for $m \geq f(\varepsilon/2, \alpha)$,
\begin{equation*}
    \Pr[\inf_{\theta\not\in\Theta_0^{2\varepsilon}}\qty[\rhat_S(\theta) - R(\theta)] + \sup_{\gamma > 0} \qty[R(\theta_\gamma) - \rhat_S(\theta_\gamma)] \geq -\varepsilon] \geq 1-\alpha,
\end{equation*}
from which the result follows.
\end{proof}
\begin{corollary}
Given a uniform convergence function $f$, define $\varepsilon(m, \alpha) = \inf\{\varepsilon \mid m\geq f(\varepsilon/2, \alpha)\}$. The confidence set $\Thetahat_S^{\varepsilon(m, \alpha)}$ converges in probability to $\Theta_0^0$, in the sense that for every $\zeta > 0$,
\begin{equation*}
    \lim_{m\rightarrow \infty} \Pr[\Thetahat_S^{\varepsilon(m, \alpha)} \subseteq \Theta_0^\zeta] = 1
\end{equation*}
\end{corollary}
\begin{proof}
Suppose $\eta$ is small enough so that $\eta < \alpha$. Then $\varepsilon(m, \alpha) \leq \varepsilon(m, \eta)$ for every $m$. Furthermore, it is straightforward to check that $\varepsilon(m, \eta)\rightarrow 0$ as $m\rightarrow\infty$, so for every $\zeta > 0$, $2\varepsilon(m, \eta) < \zeta$ for large enough $m$.  Hence, for every $\zeta>0$, we have for large enough $m$ that
\begin{equation*}
    \Pr[\Thetahat_S^{\varepsilon(m, \alpha)} \subseteq \Theta_0^\zeta] 
    \geq \Pr[\Thetahat_S^{\varepsilon(m, \eta)} \subseteq \Theta_0^\zeta] \\
    \geq \Pr[\Thetahat_S^{\varepsilon(m, \eta)} \subseteq \Theta_0^{2\varepsilon(m, \eta)}] \\
    \geq 1 - \eta.
\end{equation*}
Since $\eta>0$ can be made arbitrarily small, the result follows.
\end{proof}

\subsection{Examples}
We now present several examples in which we apply \Cref{thm:eseps_ci,thm:eseps_bigcs} to find valid confidence sets for a variety of parameters of interest.

\begin{example}\label[example]{ex:ber}\normalfont{
Suppose that we have a sample from a Bernoulli distribution and we wish to determine which of $0$ and $1$ is more likely in the population. This can be modelled as $\X = \{\varnothing\}$, $\Y = \{0, 1\}$, $\H = \{x \mapsto \theta \setst \theta \in \{0, 1\}\}$, and $L(y, y') = |y - y'|$ is the zero-one loss. Given this choice of loss function, the risk minimizer is $\theta_0 = 0$ if $p < 0.5$ and $\theta_0 = 1$ if $p > 0.5$; for $p=0.5$ the risk minimizer is not unique, so we must capture the set $\Theta_0^0 = \{0, 1\}$. 

We now compute the uniform convergence function for this model. The set of all distributions over $\X\times \Y$ is simply $\{\operatorname{Bernoulli}(p) \setst 0\leq p \leq 1\}$. For a fixed $p$, we have that
\begin{equation*}
    R(\theta) = \Pr_{Y\sim \operatorname{Ber}(p)}[\theta \neq Y] = (1-p)^\theta p^{1-\theta}.
\end{equation*}
We thus have that the left-hand side of \Cref{def:ucp} is given by
{\allowdisplaybreaks
\begin{align*}
    &\Pr\qty[\sup_{\theta = 0, 1}|R(\theta) - \rhat_S(\theta)| \leq \varepsilon] \\
    &= \Pr_{\vb*{Y}\sim\operatorname{Ber}(p)^m}\qty[\sup_{\theta = 0, 1}\qty| (1-p)^\theta p^{1-\theta} - \frac{1}{m}\sum_{i=1}^m I(Y_i \neq \theta)| \leq \varepsilon] \\
    &= \Pr\qty[\max\left\{\qty|p - \frac{1}{m}\sum_{i=1}^m I(Y_i \neq 0)|, \qty|1 - p - \frac{1}{m}\sum_{i=1}^m I(Y_i \neq 1)|\right\} \leq \varepsilon] \\
    &= \Pr\qty[\max\left\{\qty|p - \frac{1}{m}\sum_{i=1}^m Y_i|, \qty|-p + \frac{1}{m}\sum_{i=1}^m Y_i|\right\} \leq \varepsilon] \\
    &= \Pr\qty[\qty|\frac{1}{m}\sum_{i=1}^m Y_i - p| \leq \varepsilon] \\
    &= \Pr\qty[-m\varepsilon \leq \operatorname{Binom}(m, p) - mp \leq m\varepsilon] \\
    &= \sum_{i = \lceil m(p-\varepsilon)\rceil}^{\lfloor m(p+\varepsilon)\rfloor} \binom{m}{i}p^i(1-p)^{m-i}
\end{align*}
}
Setting the infimum of this quantity over $p\in[0, 1]$ to be at most $1 - \alpha$, we can numerically solve for $m$ or $\varepsilon$ to find valid confidence sets for the risk-minimizer $\theta_0$. 

Note that due to the discreteness of Bernoulli data, it is often impossible to obtain a coverage of exactly $1-\alpha$, so the confidence set will be conservative in general; this is not an issue present for continuous data.


}\end{example}

\begin{example}\normalfont{
Suppose we wish to find the $\tau$ quantile of a random variable with distribution $\D$. We can model this via $\X = \{\varnothing\}$, $\Y = \R$, $\H = \{x\mapsto \theta \setst \theta \in \Theta\}$, and $L(y, y') = (y'-y)(\tau - I(y' < y))$; the risk minimizer is precisely the $\tau$ quantile of the distribution. We have that
\begin{equation*}
	R(\theta) = (\E[Y] - \theta)\tau - \E[(Y-\theta) \given Y < \theta] \cdot \Pr[Y < \theta]
\end{equation*}
On the other hand,
\begin{equation*}
    \rhat_S(\theta) 
    = \frac{1}{m}\sum_{i=1}^m L(\theta, y)
    = \qty(\overline{y} - \theta)\tau - \frac{1}{m}\sum_{y_i < \theta}(y_i - \theta),
\end{equation*}
where $\overline{y}$ is the sample mean $\sum_{i=1}^m y_i/m$. If $Y$ is a continuous random variable, then $R(\theta) - \rhat_S(\theta)$ is continuous in $\theta$, and so if $\Theta$ is compact, then for some $\widetilde{\theta}$ a function of $y_1, \ldots, y_m$:
\begin{align*}
    &\Pr_{S\sim\D^m}\qty[\sup_{\theta}|R(\theta) - \rhat_S(\theta)| \leq \varepsilon]\\
    &= \Pr_{S\sim\D^m}\qty[\qty|\E[Y]\tau - \E[(Y-\widetilde{\theta}) \mid Y < \widetilde{\theta}] \cdot \Pr[Y < \widetilde{\theta}] - \overline{y}\tau + \frac{1}{m}\sum_{y_i < \widetilde{\theta}}(y_i - \widetilde{\theta})|\leq \varepsilon] \\
    &\geq 1 - \frac{1}{\varepsilon^2}\var\qty[\overline{y}\tau - \frac{1}{m}\sum_{y_i < \widetilde{\theta}}(y_i - \widetilde{\theta})] \\
    &= 1 - \frac{1}{m\varepsilon^2}\var\qty[\tau Y - (Y - \widetilde{\theta})\cdot I(Y < \widetilde{\theta})].
\end{align*}
It suffices for this to be at least $1-\alpha$ for all $\widetilde{\theta}$ in the parameter space. Thus, solving for $m$, a uniform convergence function for quantile estimation is given by
\begin{equation*}
    f(\varepsilon, \alpha) = \frac{1}{\alpha\varepsilon^2}\sup_{\theta\in\Theta}\var\qty[\tau Y - (Y - \theta)\cdot I(Y < \theta)].
\end{equation*}
We see that we only require bounds on the first and second moments and conditional moments of the distribution in question to construct valid confidence sets for the quantile.

Of course, in the absence of bounds on the first and second moments, such a uniform convergence function is useless. However, a uniform convergence function may still be obtained using classical PAC-Learning theory which is distribution independent; by rescaling the loss function to be bounded in $[0, 1]$, Theorem 17.1 of \citet{anthony1999} yields the bound
\begin{equation*}
    \Pr_{S\sim\D^m}\qty[\sup_{\theta}|R(\theta) - \rhat_S(\theta)| \leq \varepsilon] \geq 1 - 4N_1(\varepsilon/16, L\circ \mathcal{H}, 2m)\exp(-\varepsilon^2m/32)
\end{equation*}
where $N_1(\varepsilon/16, L\circ\mathcal{H}, 2m)$ denotes the empirical $\ell^1$ covering number of $L\circ\mathcal{H}$ with balls of radius $\varepsilon/16$ and sample size $2m$. This no longer depends on the distribution of the data, but comes at the cost of efficiency, resulting in larger confidence sets than those that make use of moment bounds.
}
\end{example}

\begin{example}\label[example]{ex:lasso}{\normalfont
Consider LASSO estimation (without intercept):
\begin{equation*}
    \widehat{\beta}_{LASSO} = \argmin_{\beta\altsetst \norm{\beta}_1 \leq t} \frac{1}{m}\sum_{i=1}^m (y_i - x_i\tp\beta)^2
\end{equation*}
where $t$ is a regularization parameter typically chosen via cross-validation. We can consider the corresponding ML model $\X = \R^p$, $\Y = \R$, $\H = \{x\mapsto x\tp \beta \altsetst \norm{\beta}_1 \leq t\}$, and $L(y, y') = (y-y')^2$. 

Suppose that the examples are generated as $X \sim \D_x$, $Y = X\tp\beta_0 + U$ for $U \sim \D_u$ independent of $X$ with $\E[U] = 0$. Then assuming that $\norm{\beta_0}_1$ is bounded above by some known $t'$, as well as bounds on the fourth moments of $X$ an $U$, our inferential framework allows us to construct a valid confidence interval for $\beta_0$.

To this end, we first note that the risk of a parameter $\beta$ is given by 
\begin{equation*}
    R(\beta) 
    = \E[(X\tp\beta_0 + U - X\tp\beta)^2]
    = \E[(X\tp(\beta_0 - \beta) + U)^2].
\end{equation*}
We can then find a uniform convergence bound:
\begin{equation*}
    \Pr_{S\sim\D^m}\qty[\sup_\beta |R(\beta) - \rhat_S(\beta)| \leq \varepsilon] = \Pr_{S\sim\D^m}\qty[\sup_\beta\qty|R(\beta) - \frac{1}{m}\sum_{i=1}^m (X_i\tp(\beta_0 - \beta) + U_i)^2| \leq \varepsilon]
\end{equation*}
Since our parameter space is compact and the function we are taking the supremum over is continuous, we can substitute $\beta_0 - \beta = \widetilde{\beta}$ for some $\widetilde{\beta}$ a function of $\beta_0$, $(X_1, \ldots, X_m)$, and $(U_1, \ldots, U_m)$ that attains that supremum:
\begin{align*}
    \Pr_{S\sim\D^m}\qty[\sup_\beta |R(\beta) - \rhat_S(\beta)| \leq \varepsilon] 
    &= \Pr_{S\sim\D^m}\qty[\qty|\E[(X\tp\widetilde{\beta} + U)^2] - \frac{1}{m}\sum_{i=1}^m (X_i\tp\widetilde{\beta} + U_i)^2| \leq \varepsilon] \\
    &\geq 1 - \frac{1}{\varepsilon^2}\var\qty[\frac{1}{m}\sum_{i=1}^m (X_i\tp \widetilde{\beta} + U_i)^2] \\
    &= 1 - \frac{1}{m\varepsilon^2}\var[(X\tp\widetilde{\beta} + U)^2]
\end{align*}
For uniform convergence, we require this to be at least $1-\alpha$ for all $\widetilde{\beta}$, which has $1$-norm at most $t+t'$. Thus, we only need to have bounds on the fourth moments of $X$ and $U$ in order to obtain valid confidence sets for $\beta_0$. Notice that if $\norm{\beta_0}_1 \leq t$ (so that the hypothesis class contains $\beta_0$), then bounding $\var[(X\tp\widetilde{\beta} + U)^2]$ is exactly the same as bounding $\var[Y^2]$ when the $\beta_0$ generating $Y$ is allowed to be ``twice as big" as our model thinks it should be.

As in the previous example, one can use PAC-Learning theory to obtain uniform convergence functions that that do not rely on information about the moments of the data-generating distribution, but yield weaker bounds. Inversely, one can often obtain stronger bounds with more assumptions on the model. For example, a common assumption in regression is that the observed data is $\sigma^2$-subgaussian. Recall that a random variable $Z$ is $\sigma^2$-subgaussian if
\begin{equation*}
    \E\qty[\exp(t(Z - \E[Z]))] \leq \exp(\frac{t^2\sigma^2}{2})
\end{equation*}
for all $t\in\R$. The square of a $\sigma^2$-subgaussian random variable is $(32\sigma^4, 4\sigma^2)$-subexponential \citep[see Supplementary Material B]{honorio2014}, in the sense that
\begin{equation*}
    \E\qty[\exp(t(Z^2 - \E[Z^2]))] \leq \exp(\frac{t^2 \cdot 32\sigma^4}{2})
\end{equation*}
for all $|t| \leq 1/(4\sigma^2)$. Using standard concentration inequalities for subexponential random variables, we then have that for i.i.d. $\sigma^2$-subgaussian random variables $Z_1, \ldots, Z_m$,
\begin{equation*}
    \Pr[\qty|\E[Z^2] - \frac{1}{m}\sum_{i=1}^m Z_i^2| \leq \varepsilon] \geq 1 - 2\exp(-\min\qty(\frac{m\varepsilon^2}{64\sigma^4}, \frac{m\varepsilon}{8\sigma^2})).
\end{equation*}
Thus, by assuming $\sigma^2$-subgaussianity of $X_i\tp\widetilde{\beta} + U_i$, we can arrive at much sharper bounds for inference on $\beta_0$ that decay exponentially with the sample size. 
}\end{example}

\begin{example}\label[example]{ex:neuralnet}
\normalfont{
Consider a regularized neural network with $10$ nodes in one hidden layer using a sigmoid activation function. The hypothesis class for this model is
\begin{equation*}
    \H = \left\{\vb*{x}\mapsto \sigma\qty(\sum_{i=1}^{10} \vb*{w}^\top \sigma(\vb*{U}\vb*{x})) \altsetst \norm{\vb*{U}_i}_2 \leq M, \norm{\vb*{w}}_1 \leq \lambda \right\}.
\end{equation*}
Unlike in the previous examples, we typically have little to no information about the data-generating examples, and so typically fall back to the distribution-free bounds from classical PAC-Learning theory. A uniform convergence function for this model with the $L^1$ loss function can be computed via the Rademacher complexity \citep[see Theorem 3.3 and Exercise 3.11]{mohri2018}:
\begin{equation*}
    \Pr[\sup_{\vb*{U}, \vb*{w}} \qty|R(\vb*{U}, \vb*{w}) - \rhat_S(\vb*{U}, \vb*{w})| \leq 2\widehat{\mathfrak{R}}_S(\H) + 3\sqrt{\frac{\log(4/\alpha)}{2m}}] \geq 1-\alpha
\end{equation*}
where $\widehat{\mathfrak{R}}_S$ is the empirical Rademacher complexity, which is upper bounded by
\begin{equation*}
    \widehat{\mathfrak{R}}_S(\H) \leq \frac{2M\cdot\lambda}{m}\sqrt{\sum_{i=1}^m\norm{x_i}_2^2}.
\end{equation*}
Although the exact values of the parameters $\vb*{U}$ and $\vb*{w}$ are often not of direct interest (as interpretability is not a large concern for modern, more complex machine learning models), what is often of interest is whether or not the parameters lie in a particular low-dimensional subspace or attain a certain level of sparsity---assertions that may be tested by the standard method of inverting the confidence set $\Thetahat_S^\varepsilon$. We also once again emphasize that although risk minimization is unlikely to have a unique solution in the case of neural networks, our confidence sets cover \textit{all} risk minimizing $(\vb*{U}, \vb*{w})$ pairs with the nominal level of coverage, as these pairs constitute $\Theta_0^0$.
}
\end{example}

\begin{example}
\normalfont{
In the previous examples, the machine learning model $(\H, L)$ had the uniform convergence property, allowing for inference on the risk minimizer. To illustrate the necessity of the uniform convergence property, we now examine the behavior of a model that does not have a uniform convergence function. 

Suppose $(X_1, Y_1)\ldots, (X_m, Y_m)$ is an i.i.d. sample such that $X_1, \ldots, X_m \iid \operatorname{Uniform}(0, 1)$ and each $Y_i = 3X_i^2 + \operatorname{Uniform}(-0.1, 0.1)$. Let us perform polynomial regression using the $L^2$ loss on the data; to ensure that the uniform convergence property does not hold, we do not impose any restriction on the maximum degree, so that
\begin{equation*}
    \H = \left\{x\mapsto \sum_{n=0}^\infty \theta_n x^n \mid \theta \in c_{00}\right\}
\end{equation*}
where $c_{00}$ is the space of eventually vanishing sequences in $\R$. It is straightforward to show that the risk minimizer is given by $\theta_0 = (0, 0, 3)$, inducing the risk minimizing hypothesis $h(x\altgiven \theta_0) = 0 + 0x + 3x^2$ with $R(\theta_0) \approx 0.00333$. 

\begin{figure}[htbp]
    \centering
        \includegraphics[scale=.37,trim=34 0 45 20,clip=true]{./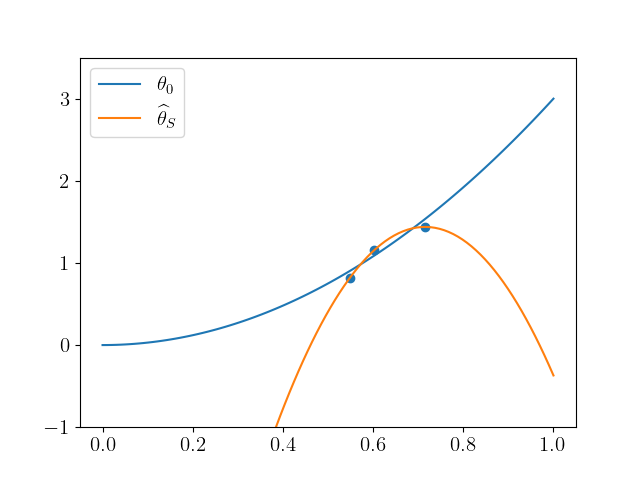}
        \includegraphics[scale=.37,trim=34 0 45 20,clip=true]{./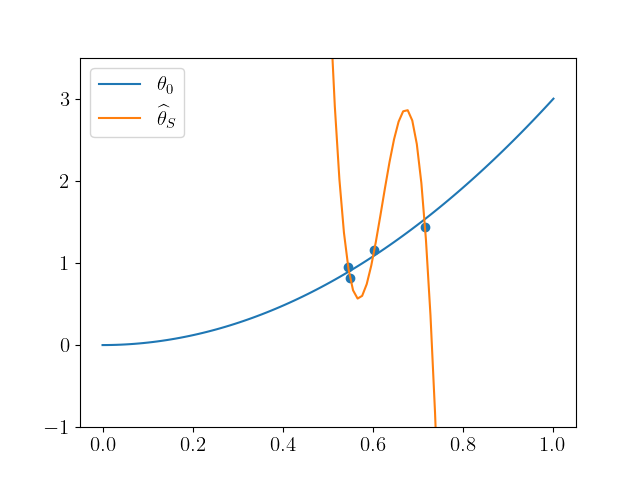}
        \includegraphics[scale=.37,trim=34 0 45 20,clip=true]{./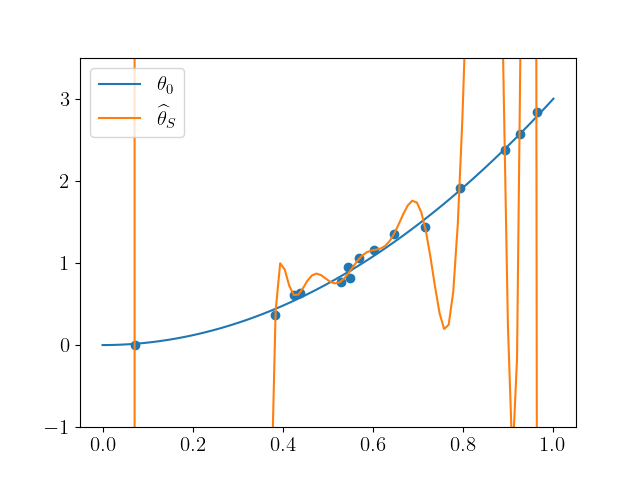}
\caption{The polynomials induced by the risk minimizer $\theta_0$ and the ERM $\thetahat_S$ at sample sizes $m=3$, $m=4$, and $m=15$.}
\label{fig:polynomial}
\end{figure}

\Cref{fig:polynomial} demonstrates that when using empirical risk minimization in this problem, we have that $\rhat_S(\thetahat_S) = 0$ at all sample sizes, but the risk of the ERM actually diverges, as does the ERM itself. Indeed, at sample size $m=3$, the ERM is $\thetahat_S = (-9.97, 31.90, -22.30)$ with $R(\thetahat_S) \approx 13.34$; at sample size $m=4$, we have $\thetahat_S = (935, -4581, 7438, -3996)$ with $R(\thetahat_S) \approx 77962$; at sample size $m=15$, we have each entry of $\thetahat_S$ to be between $10^6$ and $10^{12}$ in magnitude, with $R(\thetahat_S) \approx 2 \times 10^{10}$. 

Because $(\H, L)$ does not have the uniform convergence property, $\rhat_S(\thetahat_S)$ and $R(\theta_0)$ have essentially no relationship to each other, so the distance (in any reasonable metric) between $\thetahat_S$ and $\theta_0$ increases without bound as the sample size increases. As a result, no $\varepsilon$-neighborhood of $\thetahat_S$ can cover $\theta_0$ with probability $1-\alpha$ as the sample size grows---with probability 1, the distance between $\thetahat_S$ and $\theta_0$ will eventually be larger than the chosen $\varepsilon$. This clearly illustrates that without the uniform convergence property, one cannot perform inference on $\theta_0$ by using a confidence set centered at the ERM.
}
\end{example}

\section{Efficient Assignment of Confidence}\label{sec:distbn}
As alluded to in \Cref{ex:neuralnet},  it is often the case that there is fixed subset of the parameter space (e.g. a lower-dimensional manifold) that the practitioner is interested in performing uncertainty quantification about. For example, the practitioner may want to quantify how confident they are that for $\theta_0 = (\theta^{(1)}, \ldots, \theta^{(n)})$, it is the case that $\theta^{(1)} = \theta^{(2)} = 0$.

Bayesian inference easily allows one to calculate the posterior probability of any such arbitrary subset of the parameter space, but frequentist procedures have more difficulty in this regard. The primary problem is illustrated in \Cref{fig:confefficiency}; supposing that confidence for arbitrary regions of the parameter space is done by the usual method of inverting the confidence set
\begin{equation*}
    \operatorname{Conf}(A) = \sup\left\{1-\alpha \mid \Thetahat_S^{\varepsilon(m, \alpha)}\supseteq A\right\},
\end{equation*}
where our radius $\varepsilon$ now depends on the sample size and significance level, we arrive at assignments of confidence that are clearly inefficient. Consequently, hypothesis testing for $H_0: \theta_0 \in A$ in this manner is not particularly powerful. 

In this section, we show that the properties of the confidence sets our ML framework for inference do allow for a more efficient assignment of confidence to sufficiently regular subsets of the parameter space.

\begin{figure}
\centering
\begin{tikzpicture}
    \def\radius{50}
    \def\radiuspt{\radius pt}
    \node[circle, draw, minimum size={2*\radiuspt}, label=above right:$\Thetahat_S^{\varepsilon(m, \alpha)}$] (ci) at (0,0){};
    \node[circle, draw, fill=black, inner sep=0pt, minimum size = 3pt, label=right:$\thetahat_S$](thetahat) at (0, 0){};
    \node[circle, draw, dashed, minimum size={1.5*\radiuspt}, label= right:$A$](A)at (-0.25*\radiuspt, 0){};
    \node[circle, draw, dashed, minimum size={0.1*\radiuspt}, label= right:$B$](B)at (-0.9*\radiuspt, 0){};
\end{tikzpicture}
\caption{When inverting $\Thetahat_S^{\varepsilon(m, \alpha)}$ to determine our confidence in arbitrary regions of the parameter space, both regions $A$ and $B$ are assigned the same confidence $1-\alpha$. However, since $B \subseteq A$, it is clear that we should have $\operatorname{Conf}(B) \leq \operatorname{Conf}(A)$.}
\label{fig:confefficiency}
\end{figure}
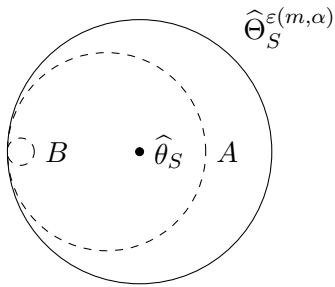

\subsection{Random Sets and Imprecise Probability}
In order to understand how our framework allows for more efficient assignments of confidence, it is critical to understand the properties of our set $\Thetahat_S^\varepsilon$ of $\varepsilon$-AERMs. It is of note that $\Thetahat_S^\varepsilon$ is random due to the dependence on the random sample $S$, but differs from usual random variables or random vectors by being set-valued. Thus, to fully understand the properties of $\Thetahat_S^\varepsilon$, we first need to understand random sets.

We begin with the formal definition of a random set \citep{molchanov2017}:
\begin{definition}
Let $(\Theta, \tau)$ be a Polish space and $(\Omega, \Sigma)$ a measurable space. A function ${X:\Omega \rightarrow 2^\Theta}$ is a closed random set if $X(\omega)$ is closed for every $\omega \in \Omega$ and 
\begin{equation*}
    \{\omega \setst X(\omega) \cap U \neq \varnothing\}\in \Sigma
\end{equation*}
for every $U \in \tau$.
\end{definition}
This definition can be generalized to non-Polish spaces as well as to non-closed random sets. 

Whereas the distribution of a random variable is given by the cumulative distribution function, the distribution of a closed random set $X$ is given by the \textit{belief} and \textit{plausibility} functions
\begin{align*}
    \bel(A) &= \Pr[X \subseteq A] \\
    \pl(A) &= \Pr[X \cap A \neq \varnothing]
\end{align*}
where $A$ is a Borel set \citep{molchanov2017}. It is straightforward to check that these two functions are duals of each other, in the sense that
$\bel(A) = 1 - \pl(\Theta\setminus A)$ and $\pl(A) = 1 - \bel(\Theta\setminus A)$.
As such, knowing either one of the belief or plausibility functions immediately yields the other. A very useful property of belief and plausibility functions is given by Choquet's Theorem \citep{matheron1974}:
\begin{theorem}
Let $\Theta$ be a locally compact Hausdorff second countable topological space. Then the plausibility function induced by a random closed set in $\Theta$ is $\infty$-alternating and upper semicontinuous. That is,
\begin{equation*}
    \pl\qty(\bigcup_{i=1}^k A_i) \leq \sum_{\varnothing\neq I\subseteq\{1, \ldots, k} (-1)^{|I| + 1}\pl\qty(\bigcap_{i\in I} A_i)
\end{equation*}
for every positive integer $k$, and $\pl(K_n) \rightarrow \pl(K)$ for every sequence of compact sets $(K_n)$ converging from above to a compact set $K$.
\end{theorem}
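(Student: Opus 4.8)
The plan is to prove the two asserted properties of $\pl$ separately, in each case transporting the statement back to elementary probability on the underlying space $(\Omega,\Sigma)$. Write $H_A := \{\omega\in\Omega \setst X(\omega)\cap A\neq\varnothing\}$, so that $\pl(A)=\Pr[H_A]$ by definition; the hypotheses that $\Theta$ be locally compact, Hausdorff and second countable are precisely what ensure that $H_A\in\Sigma$ for every Borel $A$ appearing below (via the standard measurable-projection/capacitability theory of random closed sets). I would record this measurability once and then manipulate the events $H_A$ freely.

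For $\infty$-alternation, the key is to recognize the alternating sum as a single honest probability. For Borel sets $B,A_1,\dots,A_n$ define the $n$-th order successive difference
\[
  \Delta_n(B;A_1,\dots,A_n) := \sum_{I\subseteq\{1,\dots,n\}}(-1)^{|I|+1}\,\pl\!\Bigl(B\cup\textstyle\bigcup_{i\in I}A_i\Bigr),
\]
and prove, by induction on $n$, the identity
\[
  \Delta_n(B;A_1,\dots,A_n) = \Pr\!\bigl[\,X\cap B=\varnothing \ \text{and}\ X\cap A_i\neq\varnothing\ \text{for every }i=1,\dots,n\,\bigr].
\]
The base case $n=1$ is just $\pl(B\cup A_1)-\pl(B)=\Pr[H_{B\cup A_1}\setminus H_B]=\Pr[X\cap B=\varnothing,\ X\cap A_1\neq\varnothing]$, using monotonicity of $\pl$ and the set identity $H_{B\cup A_1}\setminus H_B = H_{A_1}\cap H_B^c$. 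For the inductive step, split the defining sum according to whether $n\in I$ to get $\Delta_n(B;A_1,\dots,A_n)=\Delta_{n-1}(B;A_1,\dots,A_{n-1})-\Delta_{n-1}(B\cup A_n;A_1,\dots,A_{n-1})$, apply the inductive hypothesis to each term, and observe that the second event is the first one further intersected with $\{X\cap A_n=\varnothing\}$, so the difference of probabilities equals $\Pr[X\cap B=\varnothing,\ X\cap A_i\neq\varnothing\ \forall i\le n]$. Since a probability is nonnegative, $\Delta_n\geq 0$, which is exactly the assertion that $\pl$ is $\infty$-alternating; specializing the free set $B$ (to $\varnothing$, using $\pl(\varnothing)=0$, or to $\bigcap_i A_i$) collapses $\Delta_n\geq 0$ to an alternating-sum bound of the displayed shape, and the duality $\pl(\cdot)=1-\bel(\Theta\setminus\cdot)$ lets one pass between the union and intersection forms of such bounds.

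For upper semicontinuity, let $(K_n)$ be compact sets decreasing to a compact set $K$, so $K_n\supseteq K_{n+1}$ and $\bigcap_n K_n=K$. Then $(H_{K_n})$ is a decreasing sequence of events, so by continuity from above of $\Pr$ it suffices to show $\bigcap_n H_{K_n}=H_K$. The inclusion $\supseteq$ is trivial. For $\subseteq$, fix $\omega$ with $X:=X(\omega)$ meeting every $K_n$, choose $x_n\in X\cap K_n$, and extract a convergent subsequence $x_{n_j}\to x\in K_1$ (a compact subset of a second countable Hausdorff space is sequentially compact). Then $x\in X$ since $X$ is closed, and $x\in K_n$ for every $n$ since the $K_n$ are nested and closed (compact subsets of a Hausdorff space are closed), hence $x\in X\cap\bigcap_n K_n=X\cap K$. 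Continuity from above then gives $\pl(K_n)=\Pr[H_{K_n}]\downarrow\Pr[H_K]=\pl(K)$.

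I expect the real obstacle to be not the combinatorial identity for $\Delta_n$, which is bookkeeping, but two points. First, the measurability of every hitting event $H_A$ used above: this is exactly what the topological hypotheses on $\Theta$ buy us, and it is what makes ``random closed set'' a workable object rather than a formal device. Second, the compactness argument identifying $\bigcap_n H_{K_n}$ with $H_K$: both the closedness of the set $X$ and the compactness (hence closedness and the finite-intersection property) of the $K_n$ are genuinely needed, and the analogous claim fails for decreasing sequences of merely closed sets---which is why upper semicontinuity is asserted only along compacts. I would also remark that this is only the ``easy'' half of Choquet's theorem; the converse---that every $\infty$-alternating upper semicontinuous functional vanishing at $\varnothing$ is the plausibility functional of some random closed set---is not claimed here and needs a quite different, capacity-theoretic construction.
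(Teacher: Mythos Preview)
The paper does not prove this theorem at all: it is stated as Choquet's theorem with a citation to \citet{matheron1974}, and used as background for the theory of random sets. So there is no ``paper's own proof'' to compare against; you have supplied what the paper outsources to the literature.

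Your argument is correct and is essentially the standard textbook proof (as in Matheron or Molchanov). The inductive identification of the successive difference $\Delta_n(B;A_1,\dots,A_n)$ with the probability $\Pr[X\cap B=\varnothing,\ X\cap A_i\neq\varnothing\ \forall i]$ is exactly the right engine for $\infty$-alternation; the only place you are brisk is in passing from $\Delta_n\geq 0$ to the particular displayed inequality in the paper (left side $\pl(\bigcup A_i)$, right side alternating sum over $\pl(\bigcap_{i\in I}A_i)$), which is one of several equivalent formulations of complete alternation and does follow after a short inclusion--exclusion manipulation, but you might want to spell that step out rather than gesture at duality. For upper semicontinuity your sequential-compactness argument is fine (LCH second countable spaces are metrizable, so sequential compactness of $K_1$ is available), though a slightly cleaner variant avoids sequences entirely: the sets $X(\omega)\cap K_n$ are nonempty closed subsets of the compact set $K_1$ with the finite intersection property, hence $\bigcap_n(X(\omega)\cap K_n)=X(\omega)\cap K\neq\varnothing$. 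Your remarks about measurability of the hitting events and about this being only the easy direction of Choquet's theorem are both apt.
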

Similarly, we would have that belief functions are $\infty$-monotone (reversing the inequality) and lower semicontinuous. Note that probability measures are both $\infty$-alternating and $\infty$-monotone; this fact is often referred to as the principle of inclusion-exclusion.

The belief and plausibility functions are often additionally conditioned on the event $X\neq \varnothing$ so that we have the convenient properties that $\bel(\varnothing) = \pl(\varnothing) = 0$ and $\bel(\Theta) = \pl(\Theta) = 1$; note that these properties trivially hold if the random set is almost surely nonempty. If, in addition, the $\infty$-monotone and $\infty$-alternating conclusions from Choquet's Theorem also hold, then the belief and plausibility functions fall under the purview of the Dempster-Shafer theory of evidence \citep{dempster1967,shafer1976}. Within this framework, \citet{dempster1967} suggests that given a random set $X$ that corresponds to a quantity of interest $\theta_0\in\Theta$, we can interpret $\bel(A)$ and $\pl(A)$ as \textit{lower} and \textit{upper} probabilities for the assertion $\theta_0 \in A$. That is to say that $\bel(A)$ can be interpreted as the probability that $\theta_0\in A$ is ``known to be true," $1-\pl(A)$ can be interpreted as the probability that $\theta_0\in A$ is ``known to be false," and the remaining probability $\pl(A) - \bel(A)$ can be interpreted as a residual ``don't know" probability \citep{dempster2008}. As a simple example, consider the trivial random set $X = \Theta$; then $\bel(A) = 0$ and $\pl(A) = 1$ for every nonempty proper subset $A$ of $\Theta$---Dempster's interpretation would be that we have no evidence for the truth or falsity of $\theta_0\in A$ for any such $A$, and so have absolutely no knowledge of the location of $\theta_0$. 

A formal theory of \textit{imprecise probability} can be worked out from belief and plausibility functions. For example, \citet{shafer1979} demonstrates how to extend belief and plausibility functions from union- or intersection-closed subsets of $\Theta$ (e.g. the Borel sets) to the entire set of subsets of $\Theta$ via the notions of allocations and allowments of probability; this is analogous to how the Lebesgue measure is extended from open sets to the Lebesgue measurable sets. \citet{shafer2016} briefly discusses the theory of Choquet integration for beliefs and plausibilities (analogous to Lebesgue integration for probabilities) as well as the notions of independence, joint imprecise probability distributions, and conditional imprecise probability. In particular, \citet{shafer2016} proves that Bayes's rule holds for plausibility functions even when $\Theta$ is infinite; this theorem is known as Dempster's rule of conditioning.

The Dempster-Shafer framework is not the only approach to imprecise probability. An alternative to working with belief and plausibility functions is to work with \textit{necessity} and \textit{possibility} functions. Possibility functions still keep the properties $\operatorname{pos}(\varnothing) = 0$ and $\operatorname{pos}(\Theta) = 1$, but impose a maxitivity assumption:
\begin{equation*}
    \operatorname{pos}\qty(\bigsqcup_{i\in I} A_i) = \sup_{i\in I}\operatorname{pos}(A_i)
\end{equation*}
for every index set $I$ \citep[Definition II.5.$\eta$]{dubois1980}. The necessity function is once again the dual of the possibility function. The necessity and possibility function approach to imprecise probability has a variety of nice properties due to the maxitivity assumption, but is limited in its ability to describe the behavior of random sets: It can be shown that the plausibility function induced by a random set $X$ is a possibility function if and only if the realizations of $X$ are nested (i.e. $X(\omega_1) \subseteq X(\omega_2)$ or $X(\omega_2) \subseteq X(\omega_1)$ for all $\omega_1, \omega_2 \in \Omega$) \citep[Theorem 10.1]{shafer1976}. 

Another approach to developing imprecise probability theory is via previsions; this theory is discussed in detail in \citet{walley1991}. Given a linear space $\mathcal{F}$ of functions $f:\Theta \rightarrow \R$ (called ``gambles"), the lower and upper previsions defined on $\mathcal{F}$ are
\begin{align*}
    \underline{E}[f] &= \sup\{\mu \in \R \setst f - \mu \text{ is desirable}\} \\
    \overline{E}[f] &= \inf\{\mu\in\R \setst \mu-f \text{ is desirable}\},
\end{align*}
where we call a gamble ``desirable" if one would accept the gamble if offered. Note that previsions can then be naturally extended to larger classes of gambles (e.g. to include all indicator functions). These previsions act analogously to expectations in probability theory, and so upper and lower probabilities can be defined as the upper and lower previsions of indicator functions. 

\subsection{Validity of ML Models}
Since the set $\Thetahat_S^\varepsilon$ of $\varepsilon$-AERMs is a random set, its distribution is determined by the induced belief and plausibility functions:
\begin{align*}
	\bel_\varepsilon(A) &= \Pr_{S\sim\D^m}[\Thetahat_S^\varepsilon \subseteq A \given \Thetahat_S^\varepsilon \neq \varnothing] \\
	\pl_\varepsilon(A) &= \Pr_{S\sim\D^m}[\Thetahat_S^\varepsilon \cap A \neq \varnothing \given \Thetahat_S^\varepsilon \neq \varnothing].
\end{align*}
Notice that these functions are not well-defined when $\Thetahat_S^\varepsilon$ is almost surely empty; this is only possible when $\varepsilon = 0$ and the ERM almost surely does not exist. 
\begin{example}\normalfont{
Consider $\X = \{\varnothing\}$, $\Y = \R$, and $\H = \{x \mapsto \theta \mid \theta\in\mathbb{Q}\}$ with $L(y, y') = |y - y'|$, where the data-generating distribution is a point mass at an irrational number (e.g. $\Pr(Y = \pi) = 1$). In this example, $\thetahat_S$ never exists, since there always exists a closer rational approximation to an irrational number. Thus, $\Thetahat_S^0$ is always empty, and the belief and plausibility for $\varepsilon=0$ are not well-defined. 
}\end{example}

Because this situation can be circumvented in practice by instead considering $\varepsilon$-plausibilities for any choice of $\varepsilon > 0$, we assume for the remainder of this paper that the $\varepsilon$-plausibility is well defined.

Knowledge of the distribution of $\varepsilon$-AERMs can be used to assign a confidence to the proposition that a given region of the parameter space contains the risk minimizer. In particular, we will show in this section that sets of low plausibility cannot contain the risk minimizer. To this end, we first define the notion of validity for an ML model:
\begin{definition}\label[definition]{def:validmodel}
At a fixed sample size, the model $(\H, L)$ is valid at level $\alpha$ and tolerance $\varepsilon$ if for every Borel set $A\subseteq \Theta$ such that there exists $\delta \geq 0$ so that $\Theta_0^\delta \subseteq A$ and $\Theta_0^\delta \neq \varnothing$, we have that $\pl_\varepsilon(A) \geq 1-\alpha$.
\end{definition}
In other words, an ML model is valid if every nonempty $\delta$-neighborhood of the risk minimizer has a high plausibility. 
Once again, we note that when the risk minimizer $\theta_0$ exists, it is sufficient to only consider $\delta = 0$, as any $A$ containing $\Theta_0^\delta$ for $\delta > 0$ must contain $\theta_0\in\Theta_0^0$ itself.

\begin{lemma}\label[lemma]{lem:ucfnoninceasing}
Let $(\H, L)$ have the uniform convergence function $f$. Then $f$ is non-increasing in its first argument.
\end{lemma}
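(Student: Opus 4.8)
The plan is to reduce the claim to a single elementary observation---that the uniform-deviation event grows with the tolerance---and then to "repair" an arbitrary witness so that it inherits this monotonicity without paying anything extra at the larger tolerance.

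First I would record the key observation: for a fixed sample $S$ of size $m$, the event $E_\varepsilon := \{\,\sup_{\theta\in\Theta}|R(\theta) - \rhat_S(\theta)| \leq \varepsilon\,\}$ satisfies $E_{\varepsilon_1}\subseteq E_{\varepsilon_2}$ whenever $\varepsilon_1 \leq \varepsilon_2$, and hence $\Pr_{S\sim\D^m}[E_{\varepsilon_1}] \leq \Pr_{S\sim\D^m}[E_{\varepsilon_2}]$. In particular, if $m$ is large enough that $\Pr[E_{\varepsilon_1}] \geq 1-\alpha$, then also $\Pr[E_{\varepsilon_2}] \geq 1-\alpha$.

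Next, fix $0 < \varepsilon_1 \leq \varepsilon_2$ and $\alpha > 0$, and let $w\in\mathcal{W}$ be an arbitrary witness. I would define a modified function $\tilde w:\R^+\times\R^+\to\R$ that agrees with $w$ at every point except $(\varepsilon_2,\alpha)$, where I set $\tilde w(\varepsilon_2,\alpha) = \min\{\,w(\varepsilon_2,\alpha),\, w(\varepsilon_1,\alpha)\,\}$. Since the defining property of a witness is imposed pointwise in its two arguments, $\tilde w$ is automatically a witness at every pair other than $(\varepsilon_2,\alpha)$; and at $(\varepsilon_2,\alpha)$, any sample size $m\geq \tilde w(\varepsilon_2,\alpha)$ satisfies either $m\geq w(\varepsilon_2,\alpha)$, giving $\Pr[E_{\varepsilon_2}]\geq 1-\alpha$ directly, or $m\geq w(\varepsilon_1,\alpha)$, giving $\Pr[E_{\varepsilon_1}]\geq 1-\alpha$ and hence $\Pr[E_{\varepsilon_2}]\geq 1-\alpha$ by the monotonicity observation. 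Either way the required bound holds, so $\tilde w\in\mathcal{W}$.

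Finally, because $\tilde w(\varepsilon_2,\alpha)\leq w(\varepsilon_1,\alpha)$, we obtain $f(\varepsilon_2,\alpha) \leq \lceil \tilde w(\varepsilon_2,\alpha)\rceil \leq \lceil w(\varepsilon_1,\alpha)\rceil$; taking the infimum over all $w\in\mathcal{W}$ on the right-hand side yields $f(\varepsilon_2,\alpha) \leq f(\varepsilon_1,\alpha)$, i.e. $f$ is non-increasing in its first argument. There is no genuinely difficult step here; the only points requiring care are that altering a witness at a single coordinate pair cannot break the witness property anywhere else (since that property is a pointwise requirement), and that it is precisely the set inclusion $E_{\varepsilon_1}\subseteq E_{\varepsilon_2}$ that lets one transfer a high-probability guarantee at $\varepsilon_1$ to one at $\varepsilon_2$.
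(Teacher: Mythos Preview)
Your proposal is correct and follows essentially the same approach as the paper's proof: both exploit the event inclusion $E_{\varepsilon_1}\subseteq E_{\varepsilon_2}$, modify a witness at the single point $(\varepsilon_2,\alpha)$ to inherit the threshold from $\varepsilon_1$, and then appeal to the definition of $f$ as an infimum over ceilings of witnesses. The only cosmetic difference is that the paper uses $f$ itself as the base witness (so no infimum over $w$ is needed at the end), whereas you start from an arbitrary $w\in\mathcal W$ and take the infimum afterward; your version is arguably slightly cleaner in that it does not implicitly rely on $f$ itself being a witness.
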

\begin{proof}
Fix $\varepsilon_1,\alpha > 0$. Then
\begin{equation*}
    \Pr_{S\sim\D^m}\qty[\sup_{\theta\in\Theta} |R(\theta) - \rhat_S(\theta)| \leq \varepsilon_1] \geq 1-\alpha \enspace\text{ if } m\geq f(\varepsilon_1, \alpha).
\end{equation*}
Now let $\varepsilon_2 > \varepsilon_1$. Then we have that
\begin{equation*}
    \Pr_{S\sim\D^m}\qty[\sup_{\theta\in\Theta} |R(\theta) - \rhat_S(\theta)| \leq \varepsilon_1] \leq \Pr_{S\sim\D^m}\qty[\sup_{\theta\in\Theta} |R(\theta) - \rhat_S(\theta)| \leq \varepsilon_2]
\end{equation*}
and so
\begin{equation*}
    \Pr_{S\sim\D^m}\qty[\sup_{\theta\in\Theta} |R(\theta) - \rhat_S(\theta)| \leq \varepsilon_2] \geq 1-\alpha \enspace\text{ if } m\geq f(\varepsilon_1, \alpha).
\end{equation*}
We hence have that
\begin{equation*}
    w(\varepsilon, \alpha) = \begin{cases}
        f(\varepsilon, \alpha) & \text{if } \varepsilon \neq \varepsilon_2 \\
        f(\varepsilon_1, \alpha) & \text{if } \varepsilon = \varepsilon_2
    \end{cases}
\end{equation*}
witnesses the uniform convergence property of $(\H, L)$.
Therefore, by the definition of the uniform convergence function (\Cref{def:ucf}),
\begin{equation*}
    f(\varepsilon_2, \alpha) \leq \lceil w(\varepsilon_2, \alpha)\rceil = \lceil f(\varepsilon_1, \alpha)\rceil = f(\varepsilon_1, \alpha)
\end{equation*}
since $f(\varepsilon_1, \alpha)$ is an integer, as desired.
\end{proof}

\begin{corollary}\label[corollary]{cor:valid}
Let $(\H, L)$ have uniform convergence function $f$. If the risk minimizer $\theta_0$ exists, then $(\H, L)$ is valid at level $\alpha$ and tolerance $\varepsilon$ if $m \geq f(\varepsilon/2, \alpha)$. Otherwise, $(\H, L)$ is valid at level $\alpha$ and tolerance $\varepsilon$ if $m \geq \inf_{\delta > 0} f((\varepsilon - \delta)/2, \alpha)$.
\end{corollary}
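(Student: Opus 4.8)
The plan is to deduce validity directly from the coverage guarantee of \Cref{thm:eseps_bigcs}, using a crude but robust inequality that bounds the $\varepsilon$-plausibility of a set from below by the coverage probability of $\Thetahat_S^\varepsilon$. Concretely, I would first prove the elementary fact that whenever $\delta \geq 0$ satisfies $\varnothing \neq \Theta_0^\delta \subseteq A$,
\[
\pl_\varepsilon(A) \;\geq\; \Pr_{S\sim\D^m}\!\left[\Theta_0^\delta \subseteq \Thetahat_S^\varepsilon\right].
\]
This is pure event-chasing: on the event $\{\Theta_0^\delta \subseteq \Thetahat_S^\varepsilon\}$ we have $\Thetahat_S^\varepsilon \supseteq \Theta_0^\delta \neq \varnothing$, hence $\Thetahat_S^\varepsilon \neq \varnothing$, and also $\Thetahat_S^\varepsilon \cap A \supseteq \Theta_0^\delta \cap A = \Theta_0^\delta \neq \varnothing$; so $\{\Theta_0^\delta \subseteq \Thetahat_S^\varepsilon\} \subseteq \{\Thetahat_S^\varepsilon \cap A \neq \varnothing\} \subseteq \{\Thetahat_S^\varepsilon \neq \varnothing\}$. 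Writing the conditional probability defining $\pl_\varepsilon(A)$ as a ratio with denominator $\Pr[\Thetahat_S^\varepsilon \neq \varnothing] \leq 1$, and noting the numerator event is a subset of the conditioning event, the inequality follows.

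Combining this with \Cref{thm:eseps_bigcs}, the right-hand side is at least $1-\alpha$ as soon as $\delta \leq \varepsilon$ and $m \geq f((\varepsilon-\delta)/2,\alpha)$. So the corollary reduces to the following bookkeeping task: given an arbitrary Borel set $A$ admitting some $\delta_0 \geq 0$ with $\varnothing \neq \Theta_0^{\delta_0} \subseteq A$, exhibit a $\delta$ with $0 \leq \delta \leq \min(\delta_0,\varepsilon)$ (so that $\Theta_0^\delta \subseteq \Theta_0^{\delta_0} \subseteq A$ and $\Theta_0^\delta \neq \varnothing$) for which the stated sample-size hypothesis guarantees $m \geq f((\varepsilon-\delta)/2,\alpha)$. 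When $\theta_0$ exists, $\Theta_0^0 = \{\theta : R(\theta) = \inf_\vartheta R(\vartheta)\}$ is nonempty and automatically contained in $\Theta_0^{\delta_0} \subseteq A$, so I simply take $\delta = 0$; the requirement becomes exactly $m \geq f(\varepsilon/2,\alpha)$, which is the hypothesis. When $\theta_0$ does not exist, necessarily $\delta_0 > 0$, and $\Theta_0^{\delta'}$ is nonempty for every $\delta' > 0$ by definition of $\inf_\vartheta R(\vartheta)$.

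The one delicate point — and the main obstacle — is the choice of $\delta$ in the non-existence case, which is precisely why the infimum appears in the statement. Here I would invoke \Cref{lem:ucfnoninceasing} together with the integer-valuedness of $f$ from \Cref{def:ucf}: the map $\delta' \mapsto f((\varepsilon-\delta')/2,\alpha)$ is non-decreasing and integer-valued on $(0,\varepsilon)$, so its infimum over $(0,\varepsilon)$ coincides with $\inf_{\delta'>0} f((\varepsilon-\delta')/2,\alpha)$ and is attained (or undershot) on every left-neighborhood of $0$. Consequently there exists $\delta \in (0,\min(\delta_0,\varepsilon))$ with $f((\varepsilon-\delta)/2,\alpha) = \inf_{\delta'>0} f((\varepsilon-\delta')/2,\alpha) \leq m$. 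Feeding this $\delta$ into the two facts above — the plausibility-versus-coverage inequality and \Cref{thm:eseps_bigcs} — yields $\pl_\varepsilon(A) \geq 1-\alpha$, which is exactly validity at level $\alpha$ and tolerance $\varepsilon$ per \Cref{def:validmodel}. The care needed is to keep $\delta$ small enough to remain inside the neighborhood $\Theta_0^{\delta_0}$ promised by $A$ while simultaneously keeping $f((\varepsilon-\delta)/2,\alpha) \leq m$; monotonicity and integrality of $f$ are exactly what make these two demands compatible.
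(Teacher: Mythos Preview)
Your proposal is correct and follows essentially the same route as the paper's proof: bound $\pl_\varepsilon(A)$ below by the coverage probability of $\Thetahat_S^\varepsilon$ over some $\Theta_0^\delta \subseteq A$, then invoke the confidence-set theorem and \Cref{lem:ucfnoninceasing}. The only cosmetic differences are that the paper uses \Cref{thm:eseps_ci} directly in the existence case (rather than \Cref{thm:eseps_bigcs} with $\delta=0$) and is less explicit than you are about why integer-valuedness of $f$ ensures the infimum $\inf_{\delta>0} f((\varepsilon-\delta)/2,\alpha)$ is actually attained on a left-neighborhood of $0$.
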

\begin{proof}
Suppose that the risk minimizer $\theta_0$ exists. Then if $A$ is a Borel set and $\theta_0 \in A$, we have that
\begin{align*}
    \pl_\varepsilon(A) 
    &= \Pr[\Thetahat_S^\varepsilon \cap A \neq \varnothing \given \Thetahat_S^\varepsilon\neq \varnothing] \\
    &\geq \Pr[\Thetahat_S^\varepsilon \cap \{\theta_0\}  \neq \varnothing \given \Thetahat_S^\varepsilon\neq \varnothing] \\
    &= \Pr[\Thetahat_S^\varepsilon \ni \theta_0 \text{ and } \Thetahat_S^\varepsilon\neq\varnothing]/\Pr[\Thetahat_S^\varepsilon \neq\varnothing] \\ 
    &\geq \Pr[\Thetahat_S^\varepsilon \ni \theta_0]/1,
\end{align*}
which has probability at least $1-\alpha$ by the result of \Cref{thm:eseps_ci}.

Now suppose that the risk minimizer $\theta_0$ does not exist. Then let $\zeta > 0$ be such that $\Theta_0^\zeta \subseteq A$. Without loss of generality, we also let $\zeta < \varepsilon$. Then for any $\delta \in (0, \zeta)$, we have that
\begin{equation*}
    \pl_\varepsilon(A) = \Pr[\Thetahat_S^\varepsilon \cap A \neq \varnothing] \geq \Pr[\Theta_0^\delta \subseteq \Thetahat_S^\varepsilon]
\end{equation*}
which has probability at least $1-\alpha$ by the result of \Cref{thm:eseps_bigcs} if $m \geq f((\varepsilon-\delta)/2, \alpha)$. Since this is true for every $\delta \in (0, \zeta)$ and $f$ is non-increasing in its first argument by \Cref{lem:ucfnoninceasing}, we have validity if $m \geq \inf_{\delta \in(0,\zeta)} f((\varepsilon - \delta)/2, \alpha) = \inf_{\delta > 0} f((\varepsilon - \delta)/2, \alpha)$, as desired.
\end{proof}
Note that when $f(\cdot, \alpha)$ is left-continuous at $\varepsilon$, $\inf_{\delta > 0} f((\varepsilon - \delta)/2, \alpha) = f(\varepsilon/2, \alpha)$, and so the sample complexity is the same regardless of the existence of the risk minimizer. However, this is not the case in general, so the sample size necessary for validity is generally easier to attain when the risk minimizer does exist.

The contrapositive of the above corollary is that if a model is valid at level $\alpha$ and tolerance $\varepsilon$, then $\pl_\varepsilon(A) < 1 - \alpha$ implies that $\theta_0\not\in A$. Thus, knowledge of the distribution of the set of $\varepsilon$-AERMs yields important knowledge about the location of the risk minimizer. We restate this contrapositive as a theorem in its own right:
\begin{theorem}\label{thm:contrapos}
Let $(\H, L)$ be valid at level $\alpha$ and tolerance $\varepsilon$. If $\pl_\varepsilon(A) < 1-\alpha$  then $\theta_0 \not \in A$.
\end{theorem}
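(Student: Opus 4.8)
The plan is to observe that this theorem is essentially the contrapositive of the definition of validity (\Cref{def:validmodel}), specialized to $\delta = 0$, and is already contained in \Cref{cor:valid}. First I would argue by contradiction: suppose $\pl_\varepsilon(A) < 1-\alpha$ but nonetheless $\theta_0 \in A$. Since we are in the case where the risk minimizer exists, the neighborhood $\Theta_0^0 = \{\theta\in\Theta \setst R(\theta) \leq \inf_{\vartheta\in\Theta}R(\vartheta)\}$ equals $\{\theta_0\}$; in particular it is nonempty, and by the assumption $\theta_0\in A$ it satisfies $\Theta_0^0 \subseteq A$.

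Next I would invoke the definition of validity directly. Because $(\H, L)$ is valid at level $\alpha$ and tolerance $\varepsilon$, and because $A$ is a Borel set that contains the nonempty neighborhood $\Theta_0^0$ (i.e. \Cref{def:validmodel} applies with $\delta = 0$), we obtain $\pl_\varepsilon(A) \geq 1-\alpha$. This contradicts the standing assumption $\pl_\varepsilon(A) < 1-\alpha$, so we conclude $\theta_0 \notin A$. Equivalently, one may simply cite \Cref{cor:valid}, whose hypotheses guarantee $\pl_\varepsilon(A)\geq 1-\alpha$ whenever $\theta_0\in A$, and take the contrapositive.

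Since the argument is a direct unwinding of the definitions, I do not expect any genuine obstacle here; the theorem is stated precisely so that the nontrivial work done in \Cref{thm:eseps_ci}, \Cref{thm:eseps_bigcs}, and \Cref{cor:valid} can be reused. The only point requiring a moment's care is the degenerate situation in which $\theta_0$ is not unique or fails to exist. In that case the assertion ``$\theta_0\notin A$'' is understood via the neighborhoods $\Theta_0^\delta$, and one handles it exactly as in the second half of \Cref{cor:valid}: for any nonempty $\Theta_0^\delta\subseteq A$, validity forces $\pl_\varepsilon(A)\geq 1-\alpha$, so $\pl_\varepsilon(A) < 1-\alpha$ precludes any such $\delta$-neighborhood from lying inside $A$.
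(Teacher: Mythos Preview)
Your proposal is correct and matches the paper's approach exactly: the paper does not give a separate proof but simply introduces the theorem as ``the contrapositive of the above corollary'' (\Cref{cor:valid}), restated in its own right. Your explicit unwinding via \Cref{def:validmodel} with $\delta=0$ is precisely the intended argument, and your remark about the non-unique or non-existent case is the appropriate caveat.
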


~

This idea of determining optimal values for parameters via \Cref{thm:contrapos} can be useful for hypothesis classes that make use of a tuning parameter $\gamma$, as calculating the belief and plausibility of sets of the form $A = \{\gamma \in [a, b]\}$ may help reduce the search space for the tuning parameter.

\begin{quote}
\begin{remark}
\normalfont{
    It may initially seem strange to discuss performing ``inference" on a tuning parameter. However, it is undoubtedly the case that there exists some optimal value for the tuning parameter, as it is typically estimated via cross-validation. As such, we can still perform uncertainty quantification for this optimal tuning parameter in the same way we do so for more traditional feature parameters.
}
\end{remark}
\end{quote}
\begin{example}\normalfont{
In continuation of \Cref{ex:lasso}, consider LASSO estimation with data generated by $Y = X\beta_0 + U$. Our hypothesis class is $\H = \bigcup_{t' \leq t} \H_{t'}$, where $\H_{t'} = \{x\mapsto x\tp\beta \altsetst \norm{\beta}_1 \leq t'\}$ and $t$ is an upper bound on the LASSO tuning parameter provided by the practitioner. The value of the tuning parameter $t'$ is typically chosen via cross-validation over the interval $[0, t]$, since the optimal value for $t'$ is $t_0 = \min(t, \norm{\beta_0}_1)$ but $\norm{\beta_0}$ is typically unknown.

We draw examples $X\sim\operatorname{Unif}(-1, 1)^p$ and $U \sim \operatorname{Unif}(-1, 1)$. A uniform convergence function is then given by
\begin{equation*}
    f(\varepsilon, \alpha) = 1 - 2\exp(-\frac{m\varepsilon}{8(\norm{\beta_0} + 1)^2}).
\end{equation*}
This then allows us to compute an $\varepsilon$ necessary for validity at a given sample size $m$ and significance level $\alpha$, given (an upper bound on) the magnitude of $\norm{\beta_0}_1$. 

We randomly select a $\beta$ from $\operatorname{Unif}(-1, 1)^p$ and generate $m=1000$ training examples, computing $\pl_\varepsilon(A)$ for $A = \{\beta \altsetst \norm{\beta}_1 \leq t'\}$ for various choices of tuning parameter $t'$. That is, we conduct hypothesis tests for $H_0: t_0\leq t'$ for $t'$ ranging over an interval. In \Cref{fig:lasso_pl}, we show the results in an example where $p=10$, $t = 10$, and $\norm{\beta_0}_1 \approx 3.34$ (so that the optimal tuning parameter is $t_0 \approx 3.34$). 

We see that for tuning parameters $t'$ less than about $1.3$, the plausibility is less than $0.95$ and so cannot possibly be optimal. With access to this information, a practitioner now knows that it is only worthwhile to cross-validate over the range $[1.3, 10]$ rather than the entire range $[0, 10]$ since all tuning parameters less than $1.3$ are provably suboptimal.

\begin{figure}[H]
\centering
\includegraphics[width=0.60\textwidth]{./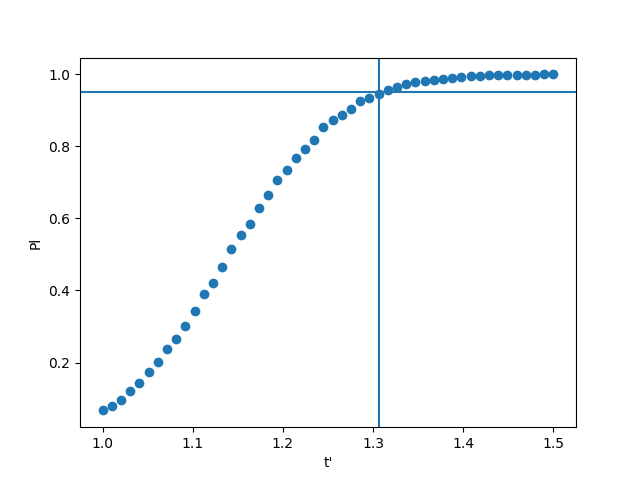}
\caption{Estimates for the plausibility of the set $\{\beta \altsetst \norm{\beta}_1 \leq t'$\} at significance level $0.05$ for different tuning parameters $t'$. A vertical line is plotted at the $t'$ that maintains plausibility at least 0.95. Plausibilities were estimated via Monte-Carlo simulation with $10000$ replicates.}\label{fig:lasso_pl}
\end{figure}
}
\end{example}

\subsection{Bootstrapping Belief and Plausibility}\label{sec:bootstrap}
Actually calculating the plausibility given a single sample is not particularly straightforward, since typically the sampling distribution $\D$ is completely unknown and the ``sample plausibility" $\widehat{\pl}_\varepsilon(A) = I(\Thetahat_S^\varepsilon \cap A \neq \varnothing)$ is always either zero or one.
Hence, it makes sense to estimate belief and plausibility via bootstrapping:
\begin{equation*}
	\widehat{\pl}^{\text{boot}, S}_{\varepsilon}(A) = \frac{1}{B}\sum_{i=1}^B I(\Thetahat_{S_i}^{\varepsilon} \cap A \neq \varnothing)
\end{equation*}
where $S_1, \ldots, S_B$ are subsamples drawn uniformly and independently from the observed sample $S$ such that $\Thetahat_{S_i}^\varepsilon$ is nonempty. Let us define
\begin{equation*}
	\pl^{\text{boot},S}_\varepsilon(A) = \lim_{B\rightarrow\infty} \frac{1}{B}\sum_{i=1}^B I(\Thetahat_{S_i}^{\varepsilon} \cap A \neq \varnothing)
\end{equation*}
and similar for the belief. Notice that the quantity $\pl_\varepsilon^{\text{boot}, S}$ is the expected value of $\widehat{\pl}_\varepsilon^{\text{boot}, S}$ over the resampling distribution. Thus, it is convenient for theoretical purposes to consider $\pl_\varepsilon^{\text{boot}, S}$ as ``the" bootstrapped plausibility. However, it is $\widehat{\pl}_\varepsilon^{\text{boot}, S}$ that is actually used by the practitioner. Luckily, these two quantities are quite close to each other even for modestly large $B$ (as we will see in \Cref{thm:samplebootplvalid}), so the practitioner (approximately) enjoys the guarantees of the bootstrapped plausibility.

Once again, these bootstrapped quantities are not well-defined when $\Thetahat_{S_i}^\varepsilon$ is empty with probability $1$. This happens if and only if $\varepsilon = 0$ and the ERM on each subsample almost surely does not exist. Because a practitioner can always avoid such an event by simply choosing any $\varepsilon > 0$ when they notice that the ERMs never exist, we will assume throughout this section that the model and sampling distribution are such that these bootstrapping quantities are well-defined. \\

In order for bootstrapping to be useful, we require slightly stronger conditions on our ML model:

\begin{definition}\label[definition]{def:strongucp}
An ML model $(\H, L)$ has the \textit{strong} uniform convergence property with respect to the data-generating distribution $\D$ over $\X\times \Y$ if there exists a function ${f:\R^+\times\R^+\rightarrow \R}$ such that for any $\varepsilon > 0$ and $\alpha > 0$, if $m \geq f(\varepsilon, \alpha)$ then 
\begin{equation*}
    \Pr_{S\sim\D^m}\qty[\sup_{\theta\in\Theta}|R(\theta) - \rhat_S(\theta)| \leq \varepsilon] \geq 1-\alpha
\end{equation*}
(i.e. \Cref{def:ucp} holds) and
\begin{equation*}
    \inf_{\widetilde{\D} \in \mathcal{B}}\Pr_{S\sim\widetilde{\D}^m}\qty[\sup_{\theta\in\Theta}|R_{\widetilde{\D}}(\theta) - \rhat_S(\theta)| \leq \varepsilon] \geq 1-\alpha,
\end{equation*}
where $\mathcal{B}$ denotes the set of bootstrap resampling distributions over $\D$; that is, the set of $\operatorname{Unif}(S)^m$ for every possible $S\sim\D^m$. We call any such $f$ a \textit{witness} to the strong uniform convergence property.
\end{definition}

In other words, the strong uniform convergence property requires that the uniform convergence property (\Cref{def:ucp}) hold over both the data-generating distribution and the bootstrap resampling distribution. Once again, we note that this condition remains much weaker than that of being a strong uniform Glivenko-Cantelli class (\Cref{def:sugc}).

\begin{definition}\label[definition]{def:strongucf}
Let $(\H, L)$ have the strong uniform convergence property, and let $\mathcal{W}$ be the set of all corresponding witnesses.
The strong uniform convergence function of $(\H, L)$ is defined to be
\begin{equation*}
    f(\varepsilon, \alpha) = \inf_{w\in\mathcal{W}} \lceil w(\varepsilon, \alpha)\rceil,
\end{equation*}
where $\lceil\,\cdot\,\rceil$ is the ceiling function.
\end{definition}

Due to the reliance on a random sample, bootstrapped plausibilities do not yield exact knowledge about the location of the risk minimizer. However, we find that bootstrapped plausibilities do yield this knowledge with high probability:

\begin{theorem}\label{thm:plboot}
Suppose $A$ is a Borel set and $\Theta_0^\delta\subseteq A$ for some $\delta > 0$. Suppose that $(\H, L)$ has the strong uniform convergence function $f$. If either
\begin{itemize}
\item[(a)] the risk minimizer $\theta_0$ exists, the empirical risk minimizer $\thetahat_S$ exists for every sample $S$, and we define $\varepsilon(m, \alpha) = {\inf\{\varepsilon \mid m \geq f(\varepsilon/2, \alpha)\}}$
\item[(b)] we define $\varepsilon(m, \alpha) = \inf\{\varepsilon \mid m \geq \inf_{\delta > 0} f((\varepsilon-\delta)/2, \alpha)\}$
\end{itemize}
then bootstrapping the plausibility function is asymptotically valid, in the sense that
\begin{equation*}
	\liminf_{m\rightarrow\infty} \Pr_{S\sim\D^m}[\pl^{\operatorname{boot},S}_{\varepsilon(m, \alpha)}(A) \geq 1-\alpha] \geq 1-\alpha.
\end{equation*}
In particular, in (a) the inequality holds for all $m$ such that $\varepsilon(m, \alpha) \leq \delta$, and in (b) the inequality holds for all $m$ such that $\varepsilon(m, \alpha) \leq \delta/2$
\end{theorem}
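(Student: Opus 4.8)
The plan is to reinterpret the bootstrap resampling distribution $\widetilde{\D}_S := \operatorname{Unif}(S)$ as a genuine data-generating distribution in its own right and to apply the confidence-set results of \Cref{sec:framework} with $\widetilde{\D}_S$ in place of $\D$. The crucial observation is that the population risk under $\widetilde{\D}_S$ is exactly the empirical risk $\rhat_S$, so the $\delta'$-approximate risk minimizers of $\widetilde{\D}_S$ are precisely the elements of $\Thetahat_S^{\delta'}$; moreover $\pl^{\text{boot},S}_\varepsilon(A)$ is literally the resampling probability $\Pr_{S'\sim\widetilde{\D}_S^m}[\Thetahat_{S'}^\varepsilon \cap A \neq \varnothing]$ (the conditioning on $\Thetahat_{S'}^\varepsilon\neq\varnothing$ being vacuous since $\varepsilon > 0$). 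The entire point of upgrading to the \emph{strong} uniform convergence property (\Cref{def:strongucp}) is that a single witness $f$ then controls the convergence of $\rhat_{S'}$ to $R_{\widetilde{\D}_S} = \rhat_S$ uniformly over \emph{every} $S$; this is what lets the argument avoid a union-bound penalty.

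The argument has two layers. Inner layer: for a fixed $S$, $\widetilde{\D}_S$ has the uniform convergence property with witness $f$ (the strong uniform convergence function, \Cref{def:strongucf}), so \Cref{thm:eseps_ci} applied to $\widetilde{\D}_S$ gives, in case (a), that $\Pr_{S'\sim\widetilde{\D}_S^m}[\thetahat_S \in \Thetahat_{S'}^\varepsilon] \geq 1-\alpha$ whenever $m \geq f(\varepsilon/2,\alpha)$ (the risk minimizer of $\widetilde{\D}_S$ is an ERM of $S$, which exists by hypothesis); in case (b) one instead uses \Cref{thm:eseps_bigcs} to cover a prescribed set $\Thetahat_S^{\delta''}$ of $\delta''$-approximate ERMs of $S$. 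Consequently, on the event (over $S$) that this point/set lies in $A$, we get $\pl^{\text{boot},S}_\varepsilon(A) \geq 1-\alpha$. Outer layer: I would show that event has $\Pr_{S\sim\D^m}$-probability at least $1-\alpha$ using the uniform-convergence event $E_S := \{\sup_{\theta}|R(\theta)-\rhat_S(\theta)| \leq \varepsilon/2\}$, which has probability at least $1-\alpha$ when $m \geq f(\varepsilon/2,\alpha)$; on $E_S$ a one-line estimate $R(\thetahat_S) \leq \rhat_S(\thetahat_S) + \varepsilon/2 = \inf_\vartheta \rhat_S(\vartheta) + \varepsilon/2 \leq R(\theta_0) + \varepsilon = \inf_\vartheta R(\vartheta) + \varepsilon$ shows $\thetahat_S \in \Theta_0^\varepsilon$ (and similarly any $\delta''$-approximate ERM lies in $\Theta_0^{\varepsilon+\delta''}$), which sits inside $\Theta_0^\delta \subseteq A$ as soon as $\varepsilon \leq \delta$ (resp. $\varepsilon+\delta'' \leq \delta$). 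Combining the two layers — and noting the inner bound holds for \emph{all} $S$, so the two ``$1-\alpha$''s do not compound — yields $\Pr_{S\sim\D^m}[\pl^{\text{boot},S}_\varepsilon(A) \geq 1-\alpha] \geq 1-\alpha$ whenever $\varepsilon$ and $m$ satisfy the compatibility conditions. Setting $\varepsilon = \varepsilon(m,\alpha)$ and using \Cref{lem:ucfnoninceasing} together with $\varepsilon(m,\alpha)\to 0$, the size constraints $\varepsilon(m,\alpha)\le\delta$ (case (a)) and $\varepsilon(m,\alpha)\le\delta/2$ (case (b)) are eventually satisfied, giving the claimed $\liminf$.

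The hard part is the bookkeeping that reconciles the inner-layer requirement, the outer-layer requirement, and the definition of $\varepsilon(m,\alpha)$ as an infimum. In case (a) the two requirements coincide (both need $m\geq f(\varepsilon/2,\alpha)$, plus $\varepsilon\leq\delta$), so it is clean modulo verifying that $m \geq f(\varepsilon(m,\alpha)/2,\alpha)$ holds at the infimizing value; the possible failure of left-continuity of $f(\cdot,\alpha)$ is exactly why the conclusion is phrased as a $\liminf$ (and as holding ``for all $m$ with $\varepsilon(m,\alpha)\le\delta$'') rather than as an exact finite-sample identity. Case (b) is more delicate because $\thetahat_S$ need not exist: one must run the argument with $\eta$-approximate ERMs $\varthetahat\in\Thetahat_S^\eta$ and let $\eta\downarrow 0$, and one must pick the level $\delta''$ of the set $\Thetahat_S^{\delta''}$ fed into \Cref{thm:eseps_bigcs} so that simultaneously $\delta''\leq\varepsilon$, $m\geq f((\varepsilon-\delta'')/2,\alpha)$, and $\varepsilon+\delta''\leq\delta$; carrying enough slack to satisfy all three at once — and to align with $\varepsilon(m,\alpha)$ as defined via \Cref{cor:valid} — is what forces the more conservative $\varepsilon(m,\alpha)\leq\delta/2$. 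A routine additional check is that $\pl^{\text{boot},S}_\varepsilon(A)$ is measurable in $S$ and non-decreasing in $\varepsilon$, so that the bounds may legitimately be stated at the single tolerance $\varepsilon(m,\alpha)$.
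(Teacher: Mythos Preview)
Your proposal is correct and follows essentially the same two-layer architecture as the paper's proof: an inner layer that applies the confidence-set machinery of \Cref{sec:framework} to the resampling distribution $\widetilde{\D}_S$ (whose population risk is $\rhat_S$ and whose risk minimizer/$\delta''$-neighborhood is $\thetahat_S$/$\Thetahat_S^{\delta''}$), and an outer layer that shows, on the uniform-convergence event for $\D$, the relevant point or set lands in $\Theta_0^\delta\subseteq A$. The only cosmetic differences are that the paper packages the inner step via \Cref{cor:valid} rather than invoking \Cref{thm:eseps_ci,thm:eseps_bigcs} directly, and in case (b) it commits to the concrete choice $\delta''=\delta/4$ (together with an auxiliary $\vartheta_0\in\Theta_0^{\delta/4}$ in place of your $\eta\downarrow 0$ argument), whereas you leave $\delta''$ abstract subject to the three constraints you list; both routes yield the same threshold $\varepsilon(m,\alpha)\le\delta/2$.
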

\begin{proof}
We first prove (a). By the strong law of large numbers, we have that
\begin{equation*}
	\pl^{\text{boot}, S}_{\varepsilon(m, \alpha)}(A) \overset{a.s.}{=} \Pr_{S_i \sim \operatorname{Unif}(S)^m}[\Thetahat_{S_i}^{\varepsilon(m, \alpha)} \cap A \neq \varnothing \mid \Thetahat_{S_i}^{\varepsilon(m, \alpha)} \neq \varnothing].
\end{equation*}
The right hand side is simply $\pl_{\varepsilon(m, \alpha)}(A)$ with respect to the uniform distribution on $S$; we know that this is at least $1-\alpha$ so long as $\thetahat_S \in A$ by Corollary \ref{cor:valid}. 
We thus have that
\begin{equation}\label{eq:plgeqconf}
	\Pr_{S\sim\D^m}[\pl^{\text{boot}, S}_{\varepsilon(m, \alpha)}(A) \geq 1-\alpha] \geq \Pr[\thetahat_S \in A].
\end{equation}
By hypothesis, $\Theta_0^\delta \subseteq A$. Hence,
\begin{equation}\label{eq:confgeqdeltabound}
    \Pr[\thetahat_S \in A] 
    \geq \Pr[\thetahat_S \in \Theta_0^\delta]
    = \Pr[R(\thetahat_S) \leq R(\theta_0) + \delta].
\end{equation}
We know that for any $m$, we have with probability at least $1-\alpha$ that
\begin{flalign*}
    &&R(\thetahat_S) &\leq \rhat_S(\thetahat_S) + \frac{\varepsilon(m, \alpha)}{2} & \text{(uniform convergence)}\\ 
    &&&\leq \rhat_S(\theta_0) + \frac{\varepsilon(m, \alpha)}{2} &\text{(definition of $\thetahat_S$)}\\
    &&&\leq R(\theta_0) + \varepsilon(m, \alpha) & \text{(uniform convergence)}
\end{flalign*}
Furthermore, as $m\rightarrow\infty$, we have that $\varepsilon(m, \alpha) \rightarrow 0$. Thus, there exists $M\in\N$ such that $\varepsilon(m, \alpha) \leq \delta$ for any $m\geq M$. Hence, we have for all $m\geq M$ that
\begin{equation*}
    \Pr_{S\sim\D^m}[R(\thetahat_S) \leq R(\theta_0) + \delta)] \geq \Pr_{S\sim\D^m}[R(\thetahat_S) \leq R(\theta_0) + \varepsilon(m, \alpha)] \geq 1-\alpha.
\end{equation*}
Combining this with equations \eqref{eq:plgeqconf} and \eqref{eq:confgeqdeltabound}, we arrive at
\begin{equation*}
    \liminf_{m\rightarrow\infty}\Pr_{S\sim\D^m}[\pl^{\text{boot}, S}_{\varepsilon(m, \alpha)}(A) \geq 1-\alpha] \geq 1-\alpha
\end{equation*}
as desired. \\

We now prove (b). As in the previous part, we have that $\pl^{\text{boot},S}_{\varepsilon(m, \alpha)}(A) \geq 1-\alpha$ so long as $\Thetahat_S^{\delta/4} \subseteq A$. Hence,
\begin{equation*}
    \Pr_{S\sim\D^m}[\pl^{\text{boot},S}_{\varepsilon(m, \alpha)}(A) \geq 1-\alpha] \geq \Pr[\Thetahat_S^{\delta/4} \subseteq A] \geq \Pr[\Thetahat_S^{\delta/4} \subseteq \Theta_0^\delta].
\end{equation*}
Let $\vartheta_0 \in \Theta_0^{\delta/4}$. By uniform convergence, we have with probability at least $1-\alpha$ that for all $\varthetahat_S\in \Thetahat_S^{\delta/4}$,
\begin{flalign*}
    &&R(\varthetahat_S) 
    &\leq \rhat_S(\varthetahat_S) + \frac{\varepsilon(m, \alpha)}{2} & \text{(uniform convergence)}\\
    &&&\leq \inf_{\theta\in\Theta} \rhat_S(\theta) + \frac{\delta}{4} + \frac{\varepsilon(m, \alpha)}{2} & \text{(definition of $\varthetahat_S\in\Thetahat_S^{\delta/4}$)} \\
    &&&\leq \rhat_S(\vartheta_0) + \frac{\delta}{4} + \frac{\varepsilon(m, \alpha)}{2} & \text{(definition of infimum)}\\
    &&& \leq R(\vartheta_0) + \varepsilon(m, \alpha) + \frac{\delta}{4} & \text{(uniform convergence)}\\
    &&&\leq \inf_{\theta\in\Theta} R(\theta) + \varepsilon(m, \alpha) + \frac{\delta}{2} & \text{(definition of $\vartheta_0\in\Theta_0^{\delta/4}$)}
\end{flalign*}
As $m\rightarrow\infty$, $\varepsilon(m, \alpha) \rightarrow 0$, so for large enough $m$, we have that $\varepsilon(m, \alpha) \leq \delta/2$, so
\begin{equation*}
    \Pr_{S\sim\D^m}\qty[\Thetahat_S^{\delta/4} \subseteq \Theta_0^{\delta}] = \Pr_{S\sim\D^m}\bigg[\bigcap_{\varthetahat_S\in\Thetahat_S^{\delta/4}}\left\{R(\varthetahat_S) \leq \inf_{\theta\in\Theta}R(\theta) + \delta\right\}\bigg] \geq 1-\alpha
\end{equation*}
for large $m$ as desired.
\end{proof}

It is worth noting that the sample size necessary for the validity of bootstrapping depends on the ``size" of $A$---i.e. the magnitude of the largest $\delta$ such that $\Theta_0^\delta \subseteq A$. If $\delta$ (and thus $A$) is large, only a small sample size $m$ is necessary for $\varepsilon(m, \alpha) \leq \delta/2$. Inversely, a small $\delta$ (and thus small $A$) requires a larger sample size for validity to hold. One might thus think that given a very large sample size, it would be reasonable to bootstrap on finite sets $A$; however, if $A$ is too small then no $\Theta_0^\delta$ will be a subset of $A$ for any $\delta > 0$, possibly hampering validity at every sample size. \\

The primary consequence of \Cref{thm:plboot} is that $1 - \pl_{\varepsilon(m, \cdot)}^{\text{boot}, S}(A)$ acts as an asymptotic $p$-value for the hypothesis $H_0: \theta_0 \in A$. Hence, we can assign an (asymptotic) confidence to the proposition that the risk minimizer lies in $A$:
\begin{equation*}
    \operatorname{Conf}(A) \approx \sup\{1-\alpha \mid \pl_{\varepsilon(m, \alpha)}^{\text{boot}, S}(A) \geq 1-\alpha\}.
\end{equation*}
Returning to figure \Cref{fig:confefficiency}, we now see that this assignment of confidence is more efficient than that from inversion of the confidence set, as if $B \subseteq A$, it is necessarily the case that $\pl^{\text{boot}, S}_\varepsilon(B) \leq \pl^{\text{boot}, S}_\varepsilon(A)$. Consequently, hypothesis testing for $H_0: \theta_0\in A$ via bootstrapping is in general more powerful than the standard method of hypothesis testing via the inversion of the confidence set.

\begin{theorem}\label{thm:samplebootplvalid}
Suppose that either set of  hypotheses in \Cref{thm:plboot} holds. Then for any $\gamma\in(0, \alpha)$,
\begin{equation*}
    \liminf_{m\rightarrow\infty}\Pr_{S\sim\D^m}[\widehat{\pl}^{\operatorname{boot},S}_{\varepsilon(m, \alpha)}(A) \geq 1 - \alpha - \gamma] \geq 1-\alpha - \exp(-\frac{6B\gamma^2}{4\gamma+3}).
\end{equation*}
In particular, we have that
\begin{equation*}
    \liminf_{m\rightarrow\infty}\Pr_{S\sim\D^m}\qty[\widehat{\pl}^{\operatorname{boot}, S}_{\varepsilon(m, \alpha-\gamma)}(A) \geq 1-\alpha] \geq 1-\alpha
\end{equation*}
if $B \geq (4\gamma + 3)\log(\gamma\inv)/(6\gamma^2)$.
\end{theorem}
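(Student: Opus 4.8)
The plan is to layer a concentration bound for the resampling randomness on top of \Cref{thm:plboot}. The key observation is the one already used at the start of the proof of \Cref{thm:plboot}: conditional on the observed sample $S$, the statistic $\widehat{\pl}^{\operatorname{boot},S}_\varepsilon(A) = \frac1B\sum_{i=1}^B I(\Thetahat_{S_i}^\varepsilon \cap A \neq \varnothing)$ is an average of $B$ i.i.d.\ Bernoulli random variables whose common mean is exactly $\pl^{\operatorname{boot},S}_\varepsilon(A)$. Hence the finite-$B$ bootstrap plausibility concentrates around its $B\to\infty$ limit, which \Cref{thm:plboot} already shows is $\geq 1-\alpha$ on an event of asymptotic probability at least $1-\alpha$; the rest is a conditioning-plus-Bernstein argument.

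Concretely, fix $\gamma\in(0,\alpha)$, write $p_m(S) = \pl^{\operatorname{boot},S}_{\varepsilon(m,\alpha)}(A)$, and let $E_m = \{S : p_m(S)\geq 1-\alpha\}$, an event depending on $S$ alone. On $E_m$ we have $\{\widehat{\pl}^{\operatorname{boot},S}_{\varepsilon(m,\alpha)}(A) < 1-\alpha-\gamma\}\subseteq\{\widehat{\pl}^{\operatorname{boot},S}_{\varepsilon(m,\alpha)}(A) - p_m(S) < -\gamma\}$, and conditional on $S$ the right-hand event is a lower-tail deviation of an average of $B$ i.i.d.\ $\operatorname{Bernoulli}(p_m(S))$ variables, each of variance $p_m(S)(1-p_m(S))\leq \tfrac14$ and with centered magnitude at most $1$. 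The one-sided Bernstein inequality then gives, for $S\in E_m$,
\[
\Pr_{S_i\sim\operatorname{Unif}(S)^m}\bigl[\widehat{\pl}^{\operatorname{boot},S}_{\varepsilon(m,\alpha)}(A) - p_m(S) \leq -\gamma\bigr] \leq \exp\!\left(-\frac{B\gamma^2}{2\cdot\tfrac14 + \tfrac23\gamma}\right) = \exp\!\left(-\frac{6B\gamma^2}{4\gamma+3}\right).
\]

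Integrating over $S$ and using that $E_m$ is $S$-measurable, a union bound yields
\[
\Pr_{S\sim\D^m}\bigl[\widehat{\pl}^{\operatorname{boot},S}_{\varepsilon(m,\alpha)}(A) < 1-\alpha-\gamma\bigr] \leq \Pr_{S\sim\D^m}[E_m^c] + \exp\!\left(-\frac{6B\gamma^2}{4\gamma+3}\right).
\]
Taking $\liminf_{m\to\infty}$ and invoking \Cref{thm:plboot} (under either hypothesis set), which gives $\limsup_{m\to\infty}\Pr[E_m^c]\leq\alpha$, produces the first displayed inequality; as in \Cref{thm:plboot}, under hypothesis (a) this in fact holds for every $m$ with $\varepsilon(m,\alpha)\leq\delta$ and under (b) for every $m$ with $\varepsilon(m,\alpha)\leq\delta/2$. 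For the ``in particular'' claim I would reapply the first part verbatim with $\alpha$ replaced by $\alpha-\gamma$ and the same $\gamma$ — legitimate since the only requirement is $\alpha-\gamma\in(0,1)$, which holds as $0<\gamma<\alpha$, and the variance bound $p(1-p)\leq\tfrac14$ is unconditional. This gives $\liminf_m \Pr[\widehat{\pl}^{\operatorname{boot},S}_{\varepsilon(m,\alpha-\gamma)}(A)\geq 1-\alpha] \geq 1-(\alpha-\gamma)-\exp(-6B\gamma^2/(4\gamma+3))$, and the right-hand side is $\geq 1-\alpha$ exactly when $\exp(-6B\gamma^2/(4\gamma+3))\leq\gamma$, i.e.\ when $B\geq(4\gamma+3)\log(\gamma^{-1})/(6\gamma^2)$.

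I do not expect a genuine obstacle: the conditional i.i.d.\ structure of the resampling given $S$, the $S$-measurability of $E_m$, and \Cref{thm:plboot} do all the real work. The only delicate point is bookkeeping with the Bernstein constants so that the exponent comes out exactly $6B\gamma^2/(4\gamma+3)$: one should use the form $\Pr[\overline Z-\E\overline Z\leq -\gamma]\leq\exp\bigl(-B\gamma^2/(2\sigma^2+\tfrac23 b\gamma)\bigr)$ with variance proxy $\sigma^2=\tfrac14$ and almost-sure centered bound $b=1$, and resist using a tighter variance bound such as $\alpha(1-\alpha)$, which would change the stated constant.
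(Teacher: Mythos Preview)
Your proposal is correct and follows essentially the same approach as the paper: Bernstein's inequality applied to the conditional Bernoulli average $\widehat{\pl}^{\operatorname{boot},S}_{\varepsilon(m,\alpha)}(A)$ given $S$, combined with \Cref{thm:plboot} via a union bound, followed by the substitution $\alpha\mapsto\alpha-\gamma$ for the second claim. Your treatment of the two sources of randomness (the sample $S$ versus the bootstrap resamples $S_i$) via the event $E_m$ is in fact a bit cleaner than the paper's, which blurs the conditioning slightly, but the underlying argument is identical.
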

\begin{proof}
First note that 
\[\widehat{\pl}^{\text{boot}, S}_{\varepsilon(m, \alpha)} \sim \frac{1}{B}\sum_{i=1}^B\operatorname{Bernoulli}(\pl^{\text{boot}, S}_{\varepsilon(m, \alpha)}).\] 
Hence, by Bernstein's Inequality, we have that
\begin{equation*}
    \Pr_{S\sim\D^m}[\widehat{\pl}^{\text{boot}, S}_{\varepsilon(m, \alpha)} \geq \pl^{\text{boot},S}_{\varepsilon(m, \alpha)}-\gamma] \geq 1 - \exp(-\frac{6B\gamma^2}{4\gamma+3}).
\end{equation*}
We know from \Cref{thm:plboot} that for large enough $m$,
\begin{equation*}
    \Pr_{S\sim\D^m}[\pl^{\text{boot},S}_{\varepsilon(m, \alpha)} \geq 1-\alpha] \geq 1-\alpha.
\end{equation*}
We thus have that
\begin{align*}
    \Pr[\widehat{\pl}^{\text{boot}, S}_{\varepsilon(m, \alpha)} 
    \geq 1-\alpha - \gamma] 
    &\geq \Pr[\widehat{\pl}^{\text{boot}, S}_{\varepsilon(m, \alpha)} \geq \pl^{\text{boot},S}_{\varepsilon(m, \alpha)}- \gamma \text{ and } \pl^{\text{boot},S}_{\varepsilon(m, \alpha)} \geq 1-\alpha] \\
    &\geq \Pr[\widehat{\pl}^{\text{boot}, S}_{\varepsilon(m, \alpha)} \geq \pl^{\text{boot},S}_{\varepsilon(m, \alpha)} - \gamma] + \Pr[\pl^{\text{boot},S}_{\varepsilon(m, \alpha)} \geq 1-\alpha] - 1\\
    &\geq 1 - \exp(-\frac{6B\gamma^2}{4\gamma+3}) + 1 - \alpha - 1 \\
    &=  1-\alpha - \exp(-\frac{6B\gamma^2}{4\gamma+3})
\end{align*}
as desired. We may then substitute $\alpha-\gamma$ for $\alpha$ to arrive at
\begin{equation*}
    \Pr[\widehat{\pl}^{\text{boot}, S}_{\varepsilon(m, \alpha - \gamma)} \geq 1-\alpha] \geq 1-\alpha + \gamma -\exp(-\frac{6B\gamma^2}{4\gamma+3}),
\end{equation*}
and the right hand side is at least $1-\alpha$ if $\gamma - \exp(-6B\gamma^2/(4\gamma+3)) \geq 0$, or equivalently if $B \geq (4\gamma+3)\log(\gamma\inv)/(6\gamma^2)$. 
\end{proof}

The practical consequence of \Cref{thm:samplebootplvalid} is as follows: When estimating plausibility by bootstrapping, the practitioner has to make a choice between the following:
\begin{enumerate}
    \item The confidence set $\Thetahat_S^{\varepsilon(m, \alpha)}$ keeps its usual tolerance (i.e. size). In exchange, rather than validity at level $\alpha$ with high probability, we have validity at level $\alpha + \gamma$ with a slightly smaller probability, where $\gamma$ decreases and the probability increases with the number of bootstrap samples. This is a reasonable choice to make when the number of bootstrap samples that can be taken is limited (e.g. due to computational reasons). If one were to this bootstrapping methodology to conduct hypothesis tests, one can minimize the type I error bound on the hypothesis test $H_0: \theta_0 \in A$ by selecting $\alpha$ and $\gamma$ appropriately for fixed $B$.
    
    \item We maintain validity at level $\alpha$ with high probability; thus, $\widehat{\pl}^{\text{boot}, S}_\varepsilon$ remains an asymptotic $p$-value. However, our confidence set $\Thetahat_S^{\varepsilon(m, \alpha-\gamma)}$, where the number of bootstrap samples necessary increases as $\gamma$ decreases, has a slightly larger tolerance and hence is less informative about the location of the risk minimizer. This is a reasonable choice to make when one can take as many bootstrap samples as desired. For hypothesis testing, doing so allows one to fix the type I error bound at $\alpha$ for the hypothesis test $H_0:\theta_0\in A$; the choice of $\gamma$ can impact the power of this test.
\end{enumerate}

\begin{example}\normalfont{
Recall from \Cref{ex:ber} the model for Bernoulli distributed data: $\X = \{\varnothing\}$, $\Y = \{0, 1\}$, $\H = \{x \mapsto \theta \setst \theta \in \{0, 1\}\}$, and $L(y, y') = |y - y'|$.

The only sets $A$ to estimate beliefs and plausibilities for are $\{0\}$, $\{1\}$, and $\{0, 1\}$. Note that the last set is the entire parameter space $\Theta$; since $\pl(\Theta) = 1$ necessarily, we focus on estimating the plausibility of the singleton sets. 
Recall that our theorem for validity of bootstrapping requires that $\Theta_0^\delta \subseteq A$. Now, the definition of $\Theta_0^\delta$ indicates that
\begin{equation*}
    \Theta_0^\delta = \{\theta \in\{0, 1\}\setst (1-p)^\theta p^{1-\theta} \leq \min\{p, 1-p\} + \delta\}
\end{equation*}
and it is straightforward to check that $0 \in \Theta_0^\delta$ if and only if $p < (1+\delta)/2$ and $1 \in \Theta_0^\delta$ if and only if $p > (1-\delta)/2$. That is,
\begin{equation*}
    \Theta_0^\delta \not\subseteq A \text{ iff } \frac{1-\delta}{2} < p < \frac{1 + \delta}{2}.
\end{equation*}
In particular, bootstrapping the singleton sets is not necessarily valid for $p = 1/2$. This makes intuitive sense, as the risk minimizer $\theta_0$ is the mode of the data-generating distribution, and when $p = 1/2$ neither singleton set can capture both modes. 

Now, our theorem indicates that bootstrapping is valid when the sample size is large enough that $\varepsilon(m, \alpha) \leq \delta$. Thus, we see that the closer $p$ is to $1/2$, the larger the sample size must be to ensure the validity of bootstrapping. 

To illustrate, we generate data from $\operatorname{Bernoulli}(0.499)$ at various sample sizes. We set the significance level to $\alpha = 0.05$ and also set $\gamma = \alpha/2$, setting the number of bootstrap samples as indicated by the theorem. For each generated sample, we checked whether or not $\widehat{\pl}_{\varepsilon(m, \alpha-\gamma)}^{\text{boot}, S}(\{0\}) \geq 1-\alpha$; we repeated this 1000 times for every sample size and report the frequency of this event. Results are shown in \Cref{fig:berbootstrap}.

Note that based on the above calculations, we require a sample size of 1,551,107 to guarantee validity at level $1-\alpha$ with probability at least $1-\alpha$. Evidently, our sample size requirement provided by the theorem is very conservative, as we appear to attain 95\% coverage nearly a full order of magnitude before 1.5 million.

It is also interesting to note that the coverage probability does not monotonically increase with the sample size $m$. This is because increasing the sample size has two competing effects: An increase in $m$ will decrease $\varepsilon(m, \alpha)$ and thus decrease the probability $\Thetahat_{S_i}^{\varepsilon(m, \alpha)}$ intersects with the set $A$ of interest (consequently decreasing the plausibility), but it will also allow the sample to be more representative of the population, allowing the estimated plausibilities to be more likely to attain the correct coverage of at least $1-\alpha$.

\begin{figure}[htbp]
    \centering
    \includegraphics[width=0.60\textwidth]{./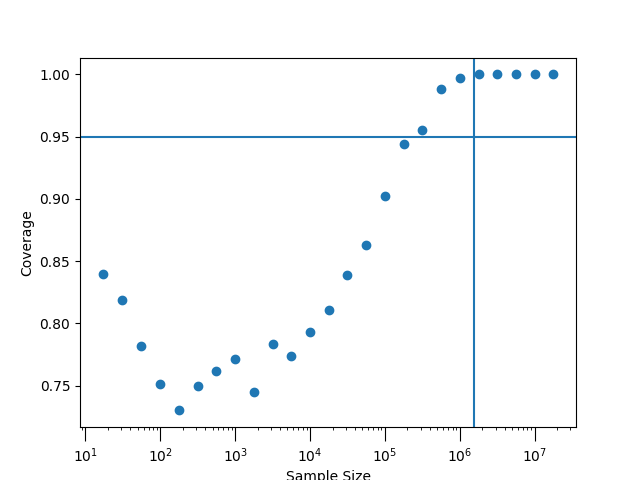}
    \caption{Empirical coverage over 1000 trials of the event $\widehat{\pl}^{\text{boot}, S}_{\varepsilon(m, \alpha - \gamma)}(\{0\}) \geq 1 - \alpha$ for $\alpha = 0.05$ and $\gamma = \alpha/2$. A vertical line is plotted at $m = 1,551,107$, where our theory guarantees 95\% coverage.}\label{fig:berbootstrap}
\end{figure}
}
\end{example}

\begin{example}\normalfont{
In \Cref{ex:neuralnet}, we mentioned that while a hypothesis test for the exact locations of the weights and biases of a neural network is not typically of great use, what is often of great importance in application is the appropriate level of sparsity of the parameters. Thus, in continuation of \Cref{ex:neuralnet}, consider a regularized neural network with 10 nodes in the only hidden layer using a sigmoid activation function, whose hypothesis class is given by
\begin{equation*}
    \H = \left\{\vb*{x}\mapsto \sigma\qty(\sum_{i=1}^{10} \vb*{w}^\top \sigma(\vb*{U}\vb*{x})) \altsetst \norm{\vb*{U}_i}_2 \leq M, \norm{\vb*{w}}_1 \leq \lambda \right\}
\end{equation*}
and is trained using the $L^1$ loss function. Then we have that
\begin{equation*}
    \varepsilon(m, \alpha) = \frac{8M\cdot\lambda}{m}\sqrt{\sum_{i=1}^m\norm{x_i}_2^2} + 6\sqrt{\frac{\log(4/\alpha)}{2m}}.
\end{equation*}

Let us suppose we are interested in the regularization parameter $\lambda$ for the weight vector $\vb*{w}$, where we limit $\lambda \in [0, 5]$. We may conduct a hypothesis test for the optimal regularization parameter $\lambda_0$ such as $H_0: \lambda_0\leq 0.25$. Suppose, furthermore, that the practitioner limits themselves to at most $B = 50$ bootstrap resamples. Then, according to \Cref{thm:samplebootplvalid}, the practitioner can reject $H_0$ at level $\alpha+\exp(-6B\gamma^2/(4\gamma+3))$ by rejecting if $\widehat{\pl}^{\text{boot},S}_{\varepsilon(m, \alpha)} < 1-\alpha-\gamma$ for some well-chosen $\alpha$ and $\gamma$ (supposing that the sample size is large enough). Optimization shows that the type I error rate of this test cannot be guaranteed to be less than roughly $0.242$ given the limit on the number of bootstrap iterations. 

To illustrate the hypothesis testing process, we can train this neural network to classify handwritten digits from the MNIST data set \citep{LeCun1998} as either even or odd. We test the aforementioned hypothesis at the best possible type I error rate, and find that for the MNIST data,  $\widehat{\pl}^{\text{boot},S}_{\varepsilon(m, \alpha)} = 0.400$. The critical value was $1-\alpha-\gamma \approx 0.585$, so we reject $H_0$ and (correctly) conclude that there is strong evidence (at significance level $0.242$) that the optimal regularization parameter is at least $0.25$. 
}    
\end{example}

\section{Comparison to Generalized Inferential Models}\label{sec:IMs}
An alternative approach to exploring notions of validity for ML, also based on imprecise probability, is given by generalized inferential models. 

The inferential models (IM) framework \citep{martin2013} is an approach to statistical inference that gives uncertainty quantification on unknown quantities $\theta_0$. This is done by assuming a data-generating mechanism $X = a(U, \theta_0)$, where the function $a$ and the distribution of the random variable $U$ are known, then creating a random set $\Thetahat_{S, E}$ for $\theta_0$ that depends on the observed sample $S$ as well as a random set $E$ intended to predict $U$. The distribution of the predictive random set is then described by the induced belief and plausibility functions:
\begin{align*}
    \bel_S(A) &= \Pr_E[\Thetahat_{S, E} \subseteq A \setst \Thetahat_{S,E} \neq \varnothing] \\
    \pl_S(A) &= \Pr_E[\Thetahat_{S,E} \cap A \neq \varnothing \given \Thetahat_{S,E} \neq \varnothing].
\end{align*}
For well-chosen predictive random sets, the induced belief and plausibility functions have a desirable validity property:
We say that an IM with belief/plausibility functions $\bel_S$ and $\pl_S$ is valid if for every measurable $A \subseteq 2^\Theta$ and every $\alpha \in (0, 1)$,
\begin{equation*}
    \sup_{\theta\in A} \Pr_{S\given\theta}[\pl_S(A) \leq \alpha] \leq \alpha.
\end{equation*}
Given a valid IM, the plausibility can be treated as a $p$-value---to test $H_0: \theta_0\in A$ against $H_1:\theta_0\not\in A$, we can reject if and only if $\pl_S(A) \leq \alpha$, and this test would have type I error rate at most $\alpha$.

The IM framework was recently extended to generalized IMs, or GIMs \citep{cella2022}, which do not need to make an assumption on the data generating mechanism and hence no longer rely on a predictive random set for the an auxiliary random variable. In this framework, an i.i.d. sample is generated from an unknown distribution that has some feature $\theta_0$ of interest. Then given a function $T_S$ which measures how well a given $\theta$ aligns with the observed sample $S$, the GIM gives an \textit{upper probability} for the assertion $\theta_0\in A$ as
\begin{equation*}
    \pl_S(A) = \sup_{\theta\in A}\{1 - G(T_S(\theta))\},
\end{equation*}
where $G$ is the cumulative distribution function for $T_S(\theta_0)$. \citet{cella2022} then shows that this is valid in the sense that if $\theta_0 \in A$, then
\begin{equation*}
    \Pr[\pl_S(A) \leq \alpha] \leq \alpha.
\end{equation*}
Because the distribution function $G$ is unknown, these plausibilities are not calculated directly, but rather through bootstrapping. In particular, \citet{cella2022} shows that if the consistency condition
\begin{equation*}
    \sup_{t\in\R}|G(t) - G^{\text{boot}}(t)| \overset{p}{\longrightarrow} 0
\end{equation*}
holds as the sample size $m$ goes to infinity (where $G^{\text{boot}}$ is the empirical cumulative distribution function of the bootstrapped values of $T_S(\thetahat)$), then
\begin{equation*}
    \limsup_{m\rightarrow\infty} \Pr[\pl^{\text{boot}}_S(A) \leq \alpha] \leq \alpha
\end{equation*}
if $\theta_0 \in A$. Thus, bootstrapped plausibilities from GIMs yield hypothesis tests for $H_0:\theta_0\in A$ with an asymptotically correct type I error rate.

In the context of machine learning, \citet{cella2022} suggests that the natural choice for $T_S$ is given by $T_S(\theta) = \rhat_S(\theta) - \inf_{\vartheta} \rhat_S(\vartheta)$. The bootstrapping approach then allows us to construct asymptotically correct confidence regions for the risk minimizer $\theta_0$ so long as the consistency condition is satisfied.

The GIM framework using the suggested $T_S$ function is clearly an alternative approach to exploring the validity of ML models. However, there are key differences between our approach and that of GIMs. Firstly, although both our approach and GIMs use bootstrapping to calculate plausibilities in practice to achieve prespecified type I error guarantees, GIMs only provide asymptotic validity, whereas we are able to arrive at validity at both finite sample sizes and at finite numbers of bootstrap resamples. Similarly, the confidence sets for $\theta_0$ in GIMs are obtained from inverting the bootstrap test, and so are only approximately valid for large sample sizes; our sets of $\varepsilon$-AERMs are valid for $\theta_0$ at all sample sizes and can be computed without bootstrapping. Furthermore, our approach gives clear bounds on the size of the confidence set and has theory regarding its consistency properties---such theory regarding the statistical power of the method is not currently present in the GIM approach. Finally, we note from an imprecise probability perspective that $\varepsilon$-plausibility in ML models is not random, whereas GIM plausibilities are; this yields different definitions for validity: GIMs require plausibilities of ``true" statements to be small with low probability, whereas we simply require that the plausibility be large.

\section{Concluding Remarks and Future Work}\label{sec:conclusion}
We have seen that for machine learning models with the uniform convergence property, it is possible to construct valid confidence sets for the model's risk minimizer $\theta_0$ via the set of $\varepsilon$-AERMs, despite the fact that these sets require essentially no structural assumptions regarding the data-generating distribution. Furthermore, if the practitioner is willing to make stronger assumptions on the model, this information can often be incorporated to construct more efficient confidence sets. We also demonstrated the necessity of the uniform convergence property by showing that for models that lack this property, it may be impossible to perform inference via ERMs at all.

Additionally, we saw that if the data-generating distribution is sufficiently known so that the distribution of the valid confidence set can be calculated, one can often determine whether or not $\theta_0 \in A$ by calculating the plausibility of the set $A$. Moreover, even when the data-generating distribution is completely unknown, one may still use bootstrapping in order to efficiently test hypotheses $H_0:\theta_0\in A$ at a given significance level $\alpha$ if the strong uniform convergence property holds. We finally note that while other methods can also yield (potentially stronger) inference on parameters of interest, such methods also typically have to make stronger, possibly unjustified assumptions about the model or rely on asymptotic theory in lieu of accomplishing finite-sample validity.

In future work, we plan to study to what extent this theory still applies to non-uniformly-learnable ML models. We also plan to investigate the statistical power of the hypothesis tests that use valid plausibilities so that we may determine how best to choose sample sizes and significance levels to attain a desired level of power. Additionally, our examples illustrate that these confidence sets and hypothesis tests can be overly-conservative; tighter bounds on uniform convergence functions for common machine learning models and data-generating scenarios (or methods to estimate these bounds from the data) must be further investigated in order to mitigate this phenomenon.




\bibliography{manuscript}

\end{document}